\definecolor {processblue}{cmyk}{0.96,0,0,0}
\DeclarePairedDelimiterX{\norm}[1]{\lVert}{\rVert}{#1}
\newtheorem{theorem}{Theorem}
\newtheorem{remark}{Remark}
\title{Linear Combinatorial Semi-Bandit with Causally Related Rewards}
\title{Linear Combinatorial Semi-Bandit with Causally Related Rewards}
\author{
Behzad Nourani-Koliji$^{1}$\footnote{Equal Contribution}\and
Saeed Ghoorchian$^{2}$$^*$\And 
Setareh Maghsudi$^{1}$\\
\affiliations
$^1$University of Tübingen\\
$^2$Technical University of Berlin\\
%$^3$Third Affiliation\\
%$^4$Fourth Affiliation\\
\emails
behzad.nourani-koliji@uni-tuebingen.de
saeed.ghoorchian@tu-berlin.de  setareh.maghsudi@uni-tuebingen.de}
\begin{document}

\maketitle
%-----------------------------------------> Abstract
\begin{abstract}
  In a sequential decision-making problem, having a structural dependency amongst the reward distributions associated with the arms makes it challenging to identify a subset of alternatives that guarantees the optimal collective outcome. Thus, besides individual actions' reward, learning the causal relations is essential to improve the decision-making strategy. To solve the two-fold learning problem described above, we develop the 'combinatorial semi-bandit framework with causally related rewards', where we model the causal relations by a directed graph in a stationary structural equation model. The nodal observation in the graph signal comprises the corresponding base arm's instantaneous reward and an additional term resulting from the causal influences of other base arms' rewards. The objective is to maximize the long-term average payoff, which is a linear function of the base arms' rewards and depends strongly on the network topology. To achieve this objective, we propose a policy that determines the causal relations by learning the network's topology and simultaneously exploits this knowledge to optimize the decision-making process. We establish a sublinear regret bound for the proposed algorithm. Numerical experiments using synthetic and real-world datasets demonstrate the superior performance of our proposed method compared to several benchmarks.
\end{abstract}
%-----------------------------------------> Introduction
\section{Introduction}
\label{sec:Intro}
In the seminal form of the Multi-Armed Bandit (MAB) problem, an agent selects an arm from a given set of arms at sequential rounds of decision-making. Upon selecting an arm, the agent receives a reward, which is drawn from the unknown reward distribution of that arm. The agent aims at maximizing the average reward over the gambling horizon \cite{robbins1952some}. The MAB problem portrays the exploration-exploitation dilemma, where the agent decides between accumulating immediate reward and obtaining information that might result in larger reward only in the future \cite{Maghsudi16:IWC}. To measure the performance of a strategy, one uses the notion of \textit{regret}. It is the difference between the accumulated reward of the applied decision-making policy and that of the optimal policy in hindsight.

In a combinatorial semi-bandit setting \cite{chen2013combinatorial}, at each round, the agent selects a subset of \textit{base arms}. This subset is referred to as a \textit{super arm}. She then observes the individual reward of each base arm that belongs to the selected super arm. Consequently, she accumulates the collective reward associated with the chosen super arm. The combinatorial MAB problem is challenging since the number of super arms is combinatorial in the number of base arms. Thus, conventional MAB algorithms such as \cite{auer2002finite} are not appropriate for combinatorial problems as they result in suboptimal regret bounds. The aforementioned problem becomes significantly more difficult when there are causal dependencies amongst the reward distributions.

In some cases, it is possible to model the causal structure that affects the rewards \cite{lattimore2016causal}. Therefore, exploiting the knowledge of this structure helps to deal with the aforementioned challenges. In our paper, we develop a novel combinatorial semi-bandit framework with causally related rewards, where we rely on Structural Equation Models (SEMs) \cite{kaplan2008structural} to model the causal relations. At each time of play, we see the instantaneous rewards of the chosen base arms as controlled stimulus to the causal system. Consequently, in our causal system, the solution to the decision-making problem is the choice over the exogenous input that maximizes the collected reward. We propose a decision-making policy to solve the aforementioned problem and prove that it achieves a sublinear regret bound in time. Our developed framework can be used to model various real-world problems, such as network data analysis of biological networks or financial markets. We apply our framework to analyze the development of Covid-19 in Italy. We show that our proposed policy is able to detect the regions that contribute the most to the spread of Covid-19 in the country.

Compared to previous works, our proposed framework does not require any prior knowledge of the structural dependencies. For example, in \cite{tang2017networked}, the authors exploit the prior knowledge of statistical structures to learn the best combinatorial strategy. At each decision-making round, the agent receives the reward of the selected super arm and some side rewards from the selected base arms' neighbors. In \cite{huyuk2019analysis} a Combinatorial Thompson Sampling (CTS) algorithm to solve a combinatorial semi-bandit problem with probabilistically triggered arms is proposed. The proposed algorithm has access to an oracle that determines the best decision at each round of play based on the already collected data. Similarly, the authors in \cite{chen2016combinatorial} study a setting where triggering super arms can probabilistically trigger other unchosen arms. They propose an Upper Confidence Bound (UCB)-based algorithm that uses an oracle to improve the decision-making process. In \cite{yu2020graphical}, the authors formulate a combinatorial bandit problem where the agent has access to an influence diagram that represents the probabilistic dependencies in the system. The authors propose a Thompson sampling algorithm and its approximations to solve the formulated problem. Further, there are some works that study the underlying structure of the problem. For example, in \cite{toni2018spectral}, the authors attempt to learn the structure of a combinatorial bandit problem. However, they do not assume any causal relations between rewards. Moreover, in \cite{sen2017identifying}, the MAB framework is employed to identify the best soft intervention on a causal system while it is assumed that the causal graph is only partially unknown.

The rest of the paper is organized as follows. In Section \ref{sec:ProFor}, we formulate the structured combinatorial semi-bandit problem with causally related rewards. In Section \ref{sec:DMStrategy}, we introduce our proposed algorithm, namely SEM-UCB. Section \ref{sec:RegAnalysis} includes the theoretical analysis of the regret performance of SEM-UCB. Section \ref{sec:NumAnalysis} is dedicated to numerical evaluation. Section \ref{sec:Conclusion} concludes the paper.

%-----------------------------------------> Problem Formulation
\section{Problem Formulation}
\label{sec:ProFor}
Let $[N] = \{1, 2, \dots, N\}$ denote the set of \textit{base arms}. $\mathbf{b}_{t} = [\mathbf{b}_{t}[1], \mathbf{b}_{t}[2], \dots, \mathbf{b}_{t}[N]] \in [0,1]^{N}$ represents the vector of \textit{instantaneous rewards} of the base arms at time $t$. The instantaneous rewards of each base arm $i \in [N]$ are independent and identically distributed (i.i.d.) random variables drawn from an unknown probability distribution with mean $\boldsymbol{\beta}[i]$. We collect the mean rewards of all the base arms in the mean reward vector of $\boldsymbol{\beta} = [\boldsymbol{\beta}[1], \boldsymbol{\beta}[2], \dots, \boldsymbol{\beta}[N]]$.

We consider a causally structured combinatorial semi-bandit problem where an agent sequentially selects a subset of base arms over time. We refer to this subset as the \textit{super arm}. More precisely, at each time $t$, the agent selects a \textit{decision vector} $\mathbf{x}_{t} = [\mathbf{x}_{t}[1], \mathbf{x}_{t}[2], \dots, \mathbf{x}_{t}[N]] \in \left \{ 0,1 \right \}^{N}$. If the agent selects the base arm $i$ at time $t$, we have $\mathbf{x}_{t}[i] = 1$, otherwise $\mathbf{x}_{t}[i] = 0$. The agent observes the value of $\mathbf{b}_{t}[i]$ at time $t$ only if $\mathbf{x}_{t}[i] = 1$. The agent is allowed to select at most $s$ base arms at each time of play. Hence, we define the set of all feasible super arms as
\begin{equation}
\label{eq:SuperArmSet}
    \mathcal{X} = \left \{ \mathbf{x} \mid \mathbf{x} \in \{0,1\}^{N} \wedge \left \| \mathbf{x} \right \|_{0}  \leq s \right\},
\end{equation}
where $\left\| \cdot \right \|_{0}$ determines the number of non-zero elements in a vector. In our problem, the parameter $s$ is pre-determined and is given to the agent.

We take advantage of a directed graph structure to model the causal relationships in the system. We consider an unknown stationary sparse Directed Acyclic Graph (DAG) $\mathcal{G} = (\mathcal{V}, \mathcal{E}, \mathbf{A})$, where $\mathcal{V}$ denotes the set of $N$ vertices, i.e., $\left | \mathcal{V} \right | = N$, $\mathcal{E}$ is the edge set, and $\mathbf{A}$ denotes the weighted adjacency matrix. By $p \leq N - 1$, we denote the length of the largest path in the graph $\mathcal{G}$. We assume that the reward generating processes in the bandit setting follow an error-free Structural Equation Model (SEM) (\cite{giannakis2018topology}, \cite{bazerque2013identifiability}). The exogenous input vector and the endogenous output vector of the SEM at each time $t$ are denoted by $\mathbf{z}_{t} = [\mathbf{z}_{t}[1], \mathbf{z}_{t}[2], \dots, \mathbf{z}_{t}[N]]$ and $\mathbf{y}_{t} = [\mathbf{y}_{t}[1], \mathbf{y}_{t}[2], \dots, \mathbf{y}_{t}[N]]$, respectively. At each time $t$, the exogenous input $\mathbf{z}_{t}$ represents the semi-bandit feedback in the decision-making problem. Formally,
\begin{align}
\label{eq:ExoInput}
    \mathbf{z}_{t} = \operatorname{diag}(\mathbf{b}_{t}) \mathbf{x}_{t},
\end{align}
where $\operatorname{diag}(\cdot)$ represents the diagonalization of its given input vector. Consequently, we define the elements of the endogenous output vector $\mathbf{y}_{t}$ as
\begin{equation}
\label{eq:OverallRew}
\mathbf{y}_{t}[i] = \sum_{i \neq j} \mathbf{A}[i,j] \mathbf{y}_{t}[j] + \mathbf{F}[i,i] \mathbf{z}_{t}[i],~~~~\forall i = 1, \dots, N,
\end{equation}
%According to our system model, the matrix $\mathbf{F}$ is known.
where $\mathbf{F}$ is a diagonal matrix that captures the effects of the exogenous input vector $\mathbf{z}_{t}$. The SEM in \Cref{eq:OverallRew} implies that the output measurement $\mathbf{y}_{t}[i]$ depends on the single-hop neighbor measurements in addition to the exogenous input signal $\mathbf{z}_{t}[i]$. In our formulation, at each time $t$, the endogenous output $\mathbf{y}_{t}[i]$ represents the \textit{overall reward} of the corresponding base arm $i \in [N]$. Therefore, at each time $t$, the overall reward of each base arm comprises two parts; one part directly results from its instantaneous reward, while the other part reflects the effect of causal influences of other base arms' overall rewards.

In \Cref{eq:OverallRew}, the overall rewards are causally related. Thus, the adjacency matrix $\mathbf{A}$ represents the causal relationships between the overall rewards; accordingly, the element $\mathbf{A}[i,j]$ of the adjacency matrix $\mathbf{A}$ denotes the causal impact of the overall reward of base arm $j$ on the overall reward of base arm $i$, and we have $\mathbf{A}[i,i] = 0$, $\forall i = 1, 2, \dots, N$. We assume that the agent is not aware of the causal relationships between the overall rewards. Hence, the adjacency matrix $\mathbf{A}$ is unknown a priori. In the following, we work with the matrix form of \Cref{eq:OverallRew}, defined at time $t$ as 
\begin{equation}
\label{eq:OverallRew-MatrixForm}
    \mathbf{y}_{t} = \mathbf{A} \mathbf{y}_{t} + \mathbf{F}\mathbf{z}_{t}.
\end{equation}
In \textbf{\Cref{fig:ExmpModel}}, we illustrate an exemplary network consisting of $N$ vertices and the underlying causal relations. Based on our problem formulation, the agent is able to observe both the exogenous input signal vector $\mathbf{z}_{t}$ and the endogenous output signal vector $\mathbf{y}_{t}$. As we see, there does not exist necessarily a causal relation between every pair of nodes.

%-----------------------------------------> Figure
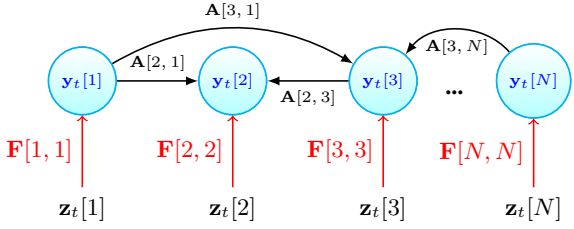
\begin{figure}[t]
\centering
\begin{tikzpicture}[-latex ,auto ,node distance =2 cm and 5cm ,on grid ,
semithick ,
state/.style ={ circle ,top color =white , bottom color = processblue!30 ,
draw,processblue , text=blue , minimum width =0.1 cm}]

\node[state](q1)  {\tiny $\mathbf{y}_{t}[1]$};
\node[state, right of=q1] (q2) {\tiny $\mathbf{y}_{t}[2]$};
\node[state, right of=q2] (q3) {\tiny $\mathbf{y}_{t}[3]$};
%\node[state, right of=q3] (q4) {$\mathbf{y}_{4t}$};
\node[state, right of=q3] (q4) {\tiny $\mathbf{y}_{t}[N]$};

% \draw (q1) edge[loop above] node {\tt $a_{12}$} (q1);
\draw (q1) edge[bend left = 0] node {\tt \tiny $\mathbf{A}{[2,1]}$} (q2);
%\draw (q2) edge[bend left= 8] node {\tt $a_{21}$} (q1);
\draw (q3) edge[bend left= 0] node {\tt \tiny $\mathbf{A}{[2,3]}$} (q2);
\draw (q1) edge[bend left= +30] node {\tt \tiny $\mathbf{A}{[3,1]}$} (q3);
\draw (q4) edge[bend left = -50] node {\tt \tiny $\mathbf{A}{[3,N]}$} (q3);

\node [right=0.7cm,text width=6cm,font=\footnotesize] at (q3){

\textbf{...}

};

\path (0,-1.7) node (x) {\small $\mathbf{z}_{t}[1]$}
(0,0) node[state](q1) {\tiny $\mathbf{y}_{t}[1]$};
\draw[->,red] (x) -- node[midway] {\small $\mathbf{F}[1,1]$} (q1);

\path (2,-1.7) node (x) {\small $\mathbf{z}_{t}[2]$}
(2,0) node[state](q2) {\tiny $\mathbf{y}_{t}[2]$};
\draw[->,red] (x) -- node[midway] {\small $\mathbf{F}[2,2]$} (q2);

\path (4,-1.7) node (x) {\small $\mathbf{z}_{t}[3]$}
(4,0) node[state](q3) {\tiny $\mathbf{y}_{t}[3]$};
\draw[->,red] (x) -- node[midway] {\small $\mathbf{F}[3,3]$} (q3);

%\path (6,-1.7) node (x) {$z_{t}[4]$}
%(6,0) node[state](q4) {$y_{t}[4]$};
%\draw[->,red] (x) -- node[midway] {$F[4,4]$} (q4);

\path (6,-1.7) node (x) {\small $\mathbf{z}_{t}[N]$}
(6,0) node[state](q4) {\tiny $\mathbf{y}_{t}[N]$};
\draw[->,red] (x) -- node[midway] {\small $\mathbf{F}[N,N]$} (q4);
 
\end{tikzpicture}
\caption{An exemplary illustration of a graph consisting of $N$ vertices and their causal relations. The black directed edges represent the causal relationships amongst the vertices.}
\label{fig:ExmpModel}
\end{figure}
%----------------------------------------->

By inserting (\ref{eq:ExoInput}) in (\ref{eq:OverallRew-MatrixForm}) and solving for $\mathbf{y}_{t}$ we obtain
\begin{equation}
\label{eq:OverallRew-Solved}
    \mathbf{y}_{t}=(\mathbf{I-A})^{-1}\mathbf{F}\operatorname{diag}(\mathbf{b}_{t}) \mathbf{x}_{t}.
\end{equation}
Finally, we define the \textit{payoff} received by the agent upon choosing the decision vector $\mathbf{x}_{t}$ as
\begin{equation}
\label{eq:Payoff}
    r(\mathbf{x}_{t}) = {\bf 1}^{\top}\mathbf{y}_{t} = {\bf 1}^{\top} (\mathbf{I-A})^{-1} \mathbf{F} \operatorname{diag}(\mathbf{b}_{t}) \mathbf{x}_{t}, 
\end{equation}
where ${\bf 1}$ is the $N$-dimensional vector of ones. Since the graph $\mathcal{G}$ is a DAG, it implies that with a proper indexing of the vertices, the adjacency matrix $\mathbf{A}$ is a strictly upper triangular matrix. This guarantees that the matrix $(\mathbf{I-A})$ is invertible. In our problem, since the agent directly observes the exogenous input, we assume that the effects of $\mathbf{F}$ on the exogenous inputs are already integrated in the instantaneous rewards. Therefore, to simplify the notation and without loss of generality, we assume that $\mathbf{F} = \mathbf{I}$ in the following.

Given a decision vector $\mathbf{x}_{t} \in \mathcal{X}$, the expected payoff at time $t$ is calculated as
\begin{equation}
\label{eq:ExpPayoff}
    \mu(\mathbf{x}_{t}) = \mathbb{E}\left [  r(\mathbf{X}) | \mathbf{X} = \mathbf{x}_{t} \right ],
\end{equation}
where the expectation is taken with respect to the randomness in the reward generating processes.

Ideally, the agent's goal is to maximize her total mean payoff over a time horizon $T$. Alternatively, the agent aims at minimizing the expected regret, defined as the difference between the expected accumulated payoff of an oracle that follows the optimal policy and that of the agent that follows the applied policy. Formally, the expected regret is defined as
\begin{equation}
\label{eq:ExpRegret}
    \mathcal{R}_{T}(\mathcal{X}) = T \mu(\mathbf{x}^{\ast}) - \sum_{t = 1}^{T} \mu(\mathbf{x}_{t}),
\end{equation}
where $\mathbf{x}^{*} = \underset{\mathbf{x} \in \mathcal{X}}{\text{argmax}} ~\mu(\mathbf{x})$ is the optimal decision vector, and $\mathbf{x}_{t}$ denotes the selected decision vector at time $t$ under the applied policy.

\begin{remark}
The definition of payoff in (\ref{eq:Payoff}) implies that we are dealing with a linear combinatorial semi-bandit problem with causally related rewards. In general, due to the randomness in selection of the decision vector $\mathbf{x}_{t}$, the consecutive overall reward vectors $\mathbf{y}_{t}$ become non-identically distributed. In the following section, we propose our algorithm that is able to deal with such variables. This is an improvement over the previous methods, such as \cite{chen2016combinatorial} and \cite{huyuk2019analysis}, that are not able to cope with our problem formulation, as they are specially designed to work with i.i.d. random variables.
\end{remark}
%
%-----------------------> Section: Decision-Making Strategy
\section{Decision-Making Strategy}
\label{sec:DMStrategy}
In this section, we present our decision-making strategy to solve the problem described in \Cref{sec:ProFor}. Our proposed policy consists of two learning components: (i) an online graph learning and (ii) an Upper Confidence Bound (UCB)-based reward learning. In the following, we describe each component separately and propose our algorithm, namely SEM-UCB.
%-----------------------> Subsection: Online Graph Learning
\subsection{Online Graph Learning}
\label{subsec:GraphLearning}
The payoff defined in (\ref{eq:Payoff}) implies that the knowledge of $\mathbf{A}$ is necessary to select decision vectors that result in higher accumulated payoffs. Therefore, the agent aims at learning the matrix $\mathbf{A}$ to improve her decision-making process. To this end, we propose an online graph learning framework that uses the collected feedback, i.e., the collected exogenous input and endogenous output vectors, to estimate the ground truth matrix $\mathbf{A}$. In the following, we formalize the online graph learning framework.

At each time $t$, we collect the feedback up to the current time in $\mathbf{Z}_{t} = [\mathbf{z}_{1} \hdots \mathbf{z}_{t}]$ and $\mathbf{Y}_{t} = [\mathbf{y}_{1} \hdots \mathbf{y}_{t}]$. Therefore,
\begin{equation}
\label{eq:OveralRew-2xMatrixForm}
    \mathbf{Y}_{t} = \mathbf{A} \mathbf{Y}_{t} + \mathbf{Z}_{t}.
\end{equation}
We assume that the right indexing of the vertices is known prior to estimating the ground truth adjacency matrix. We use the collected feedback $\mathbf{Y}_{t}$ and $\mathbf{Z}_{t}$ as the input to a parametric graph learning algorithm (\cite{giannakis2018topology}, \cite{dong2019learning}). More precisely, we use the following optimization problem to estimate the adjacency matrix at time $t$.
\begin{equation}
\label{eq:Optimization-A}
\begin{aligned}
\hat{\mathbf{A}}_{t} = \underset{\mathbf{A}}{\text{argmin}} \quad & \left \| \mathbf{Y}_{t} - \mathbf{A} \mathbf{Y}_{t} - \mathbf{Z}_{t} \right \|_{2}^{2} + \mathrm{g}(\mathbf{A})\\
\textrm{s.t.} \quad &\mathbf{A}[i,j] \geq 0, \quad \forall i, j \in [N], \\
&\mathbf{A}[i,j]=0, \quad \forall i \geq j,
\end{aligned}
\end{equation}
where $\left \| \cdot \right \|_{2}$ represents the $L^{2}\text{-norm}$ of matrices and $\mathrm{g}(\mathbf{A})$ is a regularization function that imposes sparsity over $\mathbf{A}$. In our numerical experiments, we work with different regularization functions to demonstrate the effectiveness of our proposed algorithm in different scenarios. As an example, we impose the sparsity property on the estimated matrix $\hat{\mathbf{A}}_{t}$ in (\ref{eq:Optimization-A}) by defining $\mathrm{g}(\mathbf{A}) = \lambda \left \| \mathbf{A} \right \|_{1}$, where $\left \| \cdot \right \|_{1}$ is the $L^{1}\text{-norm}$ of the matrices and $\lambda$ is the regularization parameter. Our choices of regularization function guarantee that the optimization problem (\ref{eq:Optimization-A}) is convex. 
% Thus, one can use a convex solver to solve this problem.}
%--> Subsection: SEM-UCB Algorithm
\subsection{SEM-UCB Algorithm}
\label{subsec:Alg}
We propose our decision-making policy in \textbf{Algorithm \ref{Alg:SEM-UCB}}. The key idea behind our algorithm is that it works with observations for each base arm, rather than the payoff observations for each super arm. As the same base arm can be observed while selecting different super arms, we can use the obtained information from selection of a super arm to improve our payoff estimation of other relevant super arms. This, combined with the fact that our algorithm simultaneously learns the causal relations, significantly improves the performance of our proposed algorithm and speed up the learning process.

For each base arm $i$, we define the empirical average of instantaneous rewards at time $t$ as
\begin{equation}
    \label{eq:AvgRew}
    \hat{\boldsymbol{\beta}}_{t}[i] = \frac{\sum_{\tau = 1}^{t} \mathbf{b}_{\tau}[i] \mathds{1}\left\{ \mathbf{x}_{\tau}[i] = 1\right\} }{\mathbf{m}_{t}[i]},
\end{equation}
where $\mathbf{m}_{t}[i]$ denotes the number of times that the base arm $i$ is observed up to time $t$. Formally,
\begin{equation}
    \label{eq:Counter-BaseArm}
    \mathbf{m}_{t}[i] = \sum_{\tau = 1}^{t} \mathds{1}\left\{ \mathbf{x}_{\tau}[i] = 1\right\}.
\end{equation}

The initialization phase of SEM-UCB algorithm follows a specific strategy to create a rich data that helps to learn the ground truth adjacency matrix. At each time $t$ during the first $N$ times of play, SEM-UCB picks the column $t$ of an upper-triangular \textbf{initialization matrix} $\mathbf{M} \in \{0, 1\}^{N \times N}$, where $\mathbf{M}$ is created as follows. All diagonal elements of $\mathbf{M}$ are equal to $1$. As for the column $i$, if $i \leq s$, we set all elements above diagonal to $1$. If $s + 1 \leq i \leq N$, we select $s-1$ elements above diagonal uniformly at random and set them to $1$. The remaining elements are set to $0$.

After the initialization period, our proposed algorithm takes two steps at each time $t$ to learn the causal relationships and the expected instantaneous rewards of the base arms. First, it uses the collected feedback $\mathbf{Y}_{t}$ and $\mathbf{Z}_{t}$ and solves the optimization problem (\ref{eq:Optimization-A}) to obtain the estimated adjacency matrix. It then uses the reward observations to calculate the UCB index $\mathbf{E}_{t}[i]$ for each base arm $i$, defined as
\begin{equation}
    \label{eq:UCB}
    \mathbf{E}_{t}[i] = \hat{\boldsymbol{\beta}}_{t}[i] + \sqrt{ \frac{(s+1) \text{ln} t}{\mathbf{m}_{t}[i]} }.
\end{equation}
Afterward, the algorithm selects a decision vector $\mathbf{x}_{t}$ using the current estimate of the adjacency matrix and the developed UCB indices of the base arms. Let $\mathbf{E}_{t} = [\mathbf{E}_{t}[1], \mathbf{E}_{t}[2], \dots, \mathbf{E}_{t}[N]]$. At time $t$, SEM-UCB selects $\mathbf{x}_{t}$ as
\begin{equation}
\label{eq:Optimization-x}
\begin{aligned}
\mathbf{x}_{t} = \underset{\mathbf{x} \in \mathcal{X}}{\text{argmax}} \quad & \mathbf{1}^{\top} (\mathbf{I} - \hat{\mathbf{A}}_{t-1})^{-1} \operatorname{diag}(\mathbf{E}_{t-1}) \mathbf{x} \\
\textrm{s.t.} \quad &\left \| \mathbf{x} \right \|_{0} \leq s.
\end{aligned}
\end{equation}
%
%-----------------------> Algorithm: SEM-UCB
\begin{algorithm}[t]
\caption{SEM-UCB: Structural Equation Model-Upper Confidence Bound}
\label{Alg:SEM-UCB}
\textbf{Input}: Parameter $s$, initialization matrix $\mathbf{M}$. \\
\vspace{-4mm}
\begin{algorithmic}[1] %[1] enables line numbers
\FOR{$t = 1, \dots, N$}

    \STATE Select column $t$ of the initialization matrix $\mathbf{M}$ as the decision vector $\mathbf{x}_{t}$.

    \STATE Observe $\mathbf{z}_{t}$ and $\mathbf{y}_{t}$.
    % \STATE Update $\mathbf{m}_{t}[i]$ and $\hat{\boldsymbol{\beta}}_{t}[i]$, $\forall i$~s.t.~$\mathbf{x}_{t}[i]=1$. Collect $\mathbf{Z}_{t}$ and $\mathbf{Y}_{t}$.
\ENDFOR
\FOR{$t = N+1, \dots, T$}
    \STATE Solve (\ref{eq:Optimization-A}) to obtain $\hat{\mathbf{A}}_{t-1}$.
    % the estimated adjacency matrix 

    \STATE Calculate $\mathbf{E}_{t-1}[i]$ using (\ref{eq:UCB}), $\forall i \in [N]$.
    
    \STATE Select decision vector $\mathbf{x}_{t}$ that solves (\ref{eq:Optimization-x}).
    
    \STATE Observe $\mathbf{z}_{t}$ and $\mathbf{y}_{t}$.
    
    % \STATE Update $\mathbf{m}_{t}[i]$ and $\hat{\boldsymbol{\beta}}_{t}[i]$, $\forall i$. Collect the feedback in $\mathbf{Z}_{t}$ and $\mathbf{Y}_{t}$.
\ENDFOR
\end{algorithmic}
\end{algorithm}
%-----------------------------------------
%
\begin{remark}
The initialization phase of our algorithm guarantees that all the base arms are pulled at least once and the matrix $\mathbf{M}$ is full rank. Consequently, the adjacency matrix $\mathbf{A}$ is uniquely identifiable from the collected feedback \cite{bazerque2013identifiability}.
\end{remark}
\begin{remark}
Let $\mathbf{c}^{\top} = \mathbf{1}^{\top} (\mathbf{I} - \hat{\mathbf{A}}_{t-1})^{-1} \operatorname{diag}(\mathbf{E}_{t-1})$. Since all the elements of both matrices $\mathbf{E}_{t-1}$ and $\hat{\mathbf{A}}_{t-1}$ are non-negative, we have $\mathbf{c}[i] > 0$, $\forall i \in [N]$. Thus, the optimization problem (\ref{eq:Optimization-x}) reduces to finding the $s$-biggest elements of $\mathbf{c}$. Therefore,
(\ref{eq:Optimization-x}) can be solved efficiently based on the choice of sorting algorithm used to order the elements of $\mathbf{c}$.

The computational complexity of the SEM-UCB algorithm varies depending on the solver that is used to learn the graph. For example, if we use OSQP solver \cite{osqp}, we achieve a computational complexity of order $\mathcal{O}(N^{4})$.
\end{remark}
%
%-----------------------------------------> Theoretical Analysis
\section{Theoretical Analysis}
\label{sec:RegAnalysis}
In this section, we prove an upper bound on the expected regret of SEM-UCB algorithm. We use the following definitions in our regret analysis. For any decision vector $\mathbf{x} \in \mathcal{X}$, let $\Delta(\mathbf{x}) = \mu(\mathbf{x}^{\ast}) - \mu(\mathbf{x})$. We define $\Delta_{\max} = \underset{\mathbf{x}: \mu(\mathbf{x}) < \mu(\mathbf{x}^{\ast})}{\max}~ \Delta(\mathbf{x})$ and $\Delta_{\min} = \underset{\mathbf{x}: \mu(\mathbf{x}) < \mu(\mathbf{x}^{\ast})}{\min}~ \Delta(\mathbf{x})$. Moreover, let $\mathbf{w}_{t}^{\top} = {\bf 1}^{\top}(\mathbf{I} - \hat{\mathbf{A}}_{t})^{-1}\text{diag}(\mathbf{x}_{t+1})$. We define $w_{\max} = \underset{t}{\max} ~\underset{i}{\max} ~\mathbf{w}_{t}[i]$.

The following theorem states an upper bound on the expected regret of SEM-UCB.
%-----------------------------------------> Theorem 1
\begin{theorem}
\label{thm:1}
The expected regret of SEM-UCB algorithm is upper bounded as

\begin{align} 
   &\mathcal{R}_{T}(\mathcal{X}) \leq {\Big[} \frac{4 w_{\max}^{2} s^{2} (s+1) N \ln{T} }{\Delta_{\min}^{2}} + N + \frac{\pi^{2}}{3} s^{p} N {\Big]} \Delta_{\max}.
\end{align}
% {\Big[} \frac{4 w_{\max}^{2} s^{2} (s+1) N \ln{T} }{\Delta_{\min}^{2}} + N + \frac{\pi^{2}}{3} s^{N} N {\Big]} \Delta_{\max}
\end{theorem}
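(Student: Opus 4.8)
The plan is to follow the standard combinatorial-UCB regret decomposition, but adapted to the fact that SEM-UCB tracks per–base-arm statistics rather than per–super-arm statistics, and that the payoff map passes through the estimated matrix $(\mathbf{I}-\hat{\mathbf{A}}_{t})^{-1}$. First I would write $\mathcal{R}_T(\mathcal{X}) = \sum_{t=1}^T \Delta(\mathbf{x}_t) \le \Delta_{\max}\, \mathbb{E}\big[\sum_{t=1}^T \mathds{1}\{\mathbf{x}_t \neq \mathbf{x}^\ast\}\big]$, so the whole problem reduces to bounding the expected number of times a suboptimal decision vector is played. The $N$ term in the bound accounts for the initialization rounds $t=1,\dots,N$, which contribute at most $N\Delta_{\max}$, so from now on assume $t > N$.

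The core of the argument is to show that a suboptimal pull at time $t$ forces one of two low-probability ''bad events'' to occur: either (i) the graph estimate $\hat{\mathbf{A}}_{t-1}$ is still inaccurate — this is where the factor $s^p$ and the largest-path-length $p\le N-1$ enter, presumably because an error in $\mathbf{A}$ propagates through paths of length up to $p$ and each hop can be inflated by a factor related to $s$ (the sparsity bound on how many arms feed into a node) — or (ii) at least one base arm $i$ in the selected super arm has a UCB index $\mathbf{E}_{t-1}[i]$ that deviates from $\boldsymbol{\beta}[i]$ by more than the confidence radius. For (ii) I would use Hoeffding's inequality together with the union bound over the up-to-$s$ relevant base arms and over time; the choice of confidence width $\sqrt{(s+1)\ln t/\mathbf{m}_t[i]}$ is exactly tuned so that the per-round failure probability is of order $t^{-(s+1)}\cdot s$, and summing $\sum_t s\cdot t^{-2}$ (after the $(s+1)$ exponent beats the combinatorial counting) yields the $\frac{\pi^2}{3}s^p N$ contribution. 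For (i) I would invoke the identifiability remark: since $\mathbf{M}$ is full rank, $\hat{\mathbf{A}}_t \to \mathbf{A}$, and a quantitative concentration statement on the graph-learning estimator gives that $\hat{\mathbf{A}}_{t-1}$ is ''good'' except on an event whose probability, summed over $t$, is again $O(1)$ up to the $s^p$ and $N$ factors.

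On the event that both the graph estimate and all relevant UCB indices are accurate, I would run the usual clean-event analysis: optimism guarantees that the objective value of $\mathbf{x}_t$ under $(\hat{\mathbf{A}}_{t-1},\mathbf{E}_{t-1})$ is at least that of $\mathbf{x}^\ast$, and accuracy translates this into $\mu(\mathbf{x}_t) \ge \mu(\mathbf{x}^\ast) - (\text{sum of confidence radii of arms in }\mathbf{x}_t)$, using the Lipschitz-type bound $|\mathbf{1}^\top(\mathbf{I}-\mathbf{A})^{-1}\mathrm{diag}(\mathbf{v})\mathbf{x} - \mathbf{1}^\top(\mathbf{I}-\hat{\mathbf{A}})^{-1}\mathrm{diag}(\mathbf{v}')\mathbf{x}|$ controlled by $w_{\max}$ (this is exactly why $w_{\max}$ is defined via $\mathbf{w}_t^\top = \mathbf{1}^\top(\mathbf{I}-\hat{\mathbf{A}}_t)^{-1}\mathrm{diag}(\mathbf{x}_{t+1})$). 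Hence $\Delta(\mathbf{x}_t) \le 2 w_{\max}\sum_{i:\mathbf{x}_t[i]=1}\sqrt{(s+1)\ln t/\mathbf{m}_{t-1}[i]}$. Then $\Delta(\mathbf{x}_t)\ge \Delta_{\min}$ whenever $\mathbf{x}_t$ is suboptimal, so each suboptimal base arm $i$ in $\mathbf{x}_t$ must have $\mathbf{m}_{t-1}[i]$ below a threshold of order $w_{\max}^2 s^2 (s+1)\ln T/\Delta_{\min}^2$; the extra $s^2$ comes from the $s$ terms in the sum (via Cauchy–Schwarz / the fact that $\|\mathbf{x}_t\|_0\le s$) and one more $s$ from distributing $\Delta_{\min}$ among them. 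Summing this counting bound over all $N$ base arms gives the leading $\frac{4 w_{\max}^2 s^2(s+1)N\ln T}{\Delta_{\min}^2}$ term, and multiplying the total count of suboptimal rounds by $\Delta_{\max}$ finishes the proof.

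The main obstacle I anticipate is step (i): making the graph-learning error precise. The regularized least-squares estimator in \Cref{eq:Optimization-A} has a data matrix $\mathbf{Y}_t$ whose columns are \emph{not} i.i.d.\ (the Remark already flags this), so I cannot quote a textbook LASSO/OLS concentration result directly; I would need a tailored argument that the initialization phase plus the minimum eigenvalue of the accumulated Gram matrix $\mathbf{Y}_t\mathbf{Y}_t^\top$ growing linearly in $t$ forces $\|\hat{\mathbf{A}}_{t-1}-\mathbf{A}\|$ to shrink, and then to chain this through the path expansion $(\mathbf{I}-\mathbf{A})^{-1} = \sum_{k=0}^{p}\mathbf{A}^k$ to see why only $s^p$ rather than something exponential in $N$ appears. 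A secondary subtlety is handling the dependence between the confidence radius (which uses $\mathbf{m}_{t-1}[i]$, a random quantity) and the reward samples; the standard peeling/union-bound trick over all possible values of the counter resolves it but must be stated carefully so the $\sum_t t^{-2}$ summation is legitimate.
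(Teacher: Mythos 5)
Your overall skeleton (regret $\le \Delta_{\max}$ times a per-base-arm count of suboptimal rounds, a clean-event argument giving the threshold of order $w_{\max}^2 s^2(s+1)\ln T/\Delta_{\min}^2$ on $\mathbf{m}_t[i]$, and Hoeffding plus a peeling/union bound over counter values for the deviation events) matches the paper's proof. But there is a genuine gap at exactly the point you flag as your ``main obstacle'': you treat the error $\hat{\mathbf{A}}_{t-1}-\mathbf{A}$ as something that must be controlled by a quantitative concentration bound for the regularized least-squares estimator in (\ref{eq:Optimization-A}), and you admit you do not have such a bound (and indeed the non-i.i.d.\ columns of $\mathbf{Y}_t$ make a textbook result inapplicable). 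The paper never proves, and never needs, any such estimation-error bound. Because the SEM is error-free and the initialization matrix $\mathbf{M}$ is full rank, the paper invokes the identifiability result of Bazerque et al.\ to conclude that the $N$ initialization samples determine $\mathbf{A}$ uniquely, so $\hat{\mathbf{A}}_t=\mathbf{A}$ holds with probability $1$ for all $t>N$; the corresponding ``graph-error'' event (Event $\mathcal{II}$ in the paper) therefore has probability zero after initialization. Without this observation your step (i) cannot be completed as proposed, so the argument as written does not close.

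A related misattribution: you explain the $s^p$ factor as arising from propagation of graph-estimation error along paths of length up to $p$. In the paper the graph is exactly known after round $N$, and $s^p$ instead comes from the Neumann expansion $(\mathbf{I}-\mathbf{A})^{-1}=\mathbf{I}+\mathbf{A}+\dots+\mathbf{A}^p$ (valid since $\mathbf{A}$ is nilpotent and the longest path has length $p$): each power contributes a Hoeffding-type failure probability of at most $s\,t^{-2(s+1)}$ for the reward estimates, and combining over the $p$ powers yields the $s^p t^{-2p(s+1)}$ bound that, after summing over the counter values and over $t$, gives the $\frac{\pi^2}{3}s^p$ term per base arm. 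Two smaller points: the paper's $N$ term comes from the ceiling in the threshold $l$ summed over the $N$ per-arm counters (not from charging the initialization rounds, as you do), and your statement that ``each suboptimal base arm in $\mathbf{x}_t$ must have a small count'' should be replaced by the paper's device of incrementing exactly one counter $\mathscr{T}_i(t)$ per suboptimal round (the arm with the minimal count, ties broken arbitrarily), which is what makes the per-arm counting sum to the number of suboptimal pulls without double counting.
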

%-----------------------------------------> Proof
\begin{proof}
See Section A of supplementary material.
\end{proof}
%-----------------------------------------> Section: Numerical Analysis
\section{Experimental Analysis}
\label{sec:NumAnalysis}
In this section, we present experimental results to provide more insight on the usefulness of learning the causal relations for improving the decision-making process. We evaluate the performance of our algorithm on synthetic and real-world datasets by comparing it to standard benchmark algorithms.

\paragraph{Benchmarks.~} We compare SEM-UCB with state-of-the-art combinatorial semi-bandit algorithms that do not learn the causal structure of the problem. Specifically, we compare our algorithm with the following policies: (i) CUCB \cite{chen2016combinatorial} calculates a UCB index for each base arm at each time $t$ and feeds them to an approximation oracle that outputs a super arm. (ii) DFL-CSR \cite{tang2017networked} develops a UCB index for each base arm and selects a super arm at each time $t$ based on a prior knowledge of a graph structure that shows the correlations among base arms. (iii) CTS \cite{huyuk2019analysis} employs Thompson sampling and uses an oracle to select a super arm at each time $t$. (iv) FTRL \cite{zimmert2019beating} selects a super arm at each time $t$ based on the method of Follow-the-Regularized-Leader. To be comparable, we apply these benchmarks on the vector of overall reward $\mathbf{y}_{t}$ at each time $t$. If a benchmark requires $\mathbf{y}_{t}$ to be in $[0,1]$, we feed the normalized version of $\mathbf{y}_{t}$ to the corresponding algorithm. Finally, in our experiments, we choose $s=6$, meaning that the algorithms can choose $6$ base arms at each time of play.
%-----------------------------------------> Synthetic Data
\subsection{Synthetic Dataset}
\label{subsec:SynData}
Our simulation setting is as follows. We first create a graph consisting of $N = 20$ nodes. The elements of the adjacency matrix $\mathbf{A}$ are drawn from a uniform distribution over $[0.4,0.7]$. The edge density of the ground truth adjacency matrix is $0.15$. At each time $t$, the vector of instantaneous rewards $\mathbf{b}_{t}$ is drawn from a multivariate normal distribution with the support in $[0,1]^{20}$ and a spherical covariance matrix. As demonstrated in \Cref{sec:ProFor}, we generate the vector of overall rewards according to the SEM in (\ref{eq:OverallRew}). We use $\mathrm{g}(\mathbf{A}) = \lambda \left \| \mathbf{A} \right \|_{1}$ as the regularization function in (\ref{eq:Optimization-A}) when estimating the adjacency matrix $\mathbf{A}$. The regularization parameter $\lambda$ is tuned by grid search over $[0.0001,1000]$. We evaluate the estimated adjacency matrix at each time $t$ by using the mean squared error defined as $\text{MSE} = \frac{1}{N^{2}} \left\| \mathbf{A} - \hat{\mathbf{A}} \right\|_{\text{F}}^{2}$, where $\left \| \cdot \right \|_{\text{F}}$ denotes the Frobenius norm.

\paragraph{Comparison with the benchmarks.~} We run the algorithms using the aforementioned synthetic data with $T = 4000$. In \textbf{\Cref{fig:Reg-Synthetic}}, we depict the trend of time-averaged expected regret for each policy. As we see, SEM-UCB surpasses all other policies. This is due to the fact that SEM-UCB learns the network's topology and hence, it has a better knowledge of the causal relationships in the graph structure, unlike other policies that do not estimate the graph structure. As we see, the time-averaged expected regret of SEM-UCB tends to zero. This matches with our theoretical results in \Cref{sec:RegAnalysis}. Note that, the benchmark policies exhibit a suboptimal regret performance as they have to deal with non-identically distributed random variables $\mathbf{y}_{t}$.
%-----------------------------------------> Figure
\begin{figure}[t]
    \centerline{
    \includegraphics*[width=0.48\textwidth]{./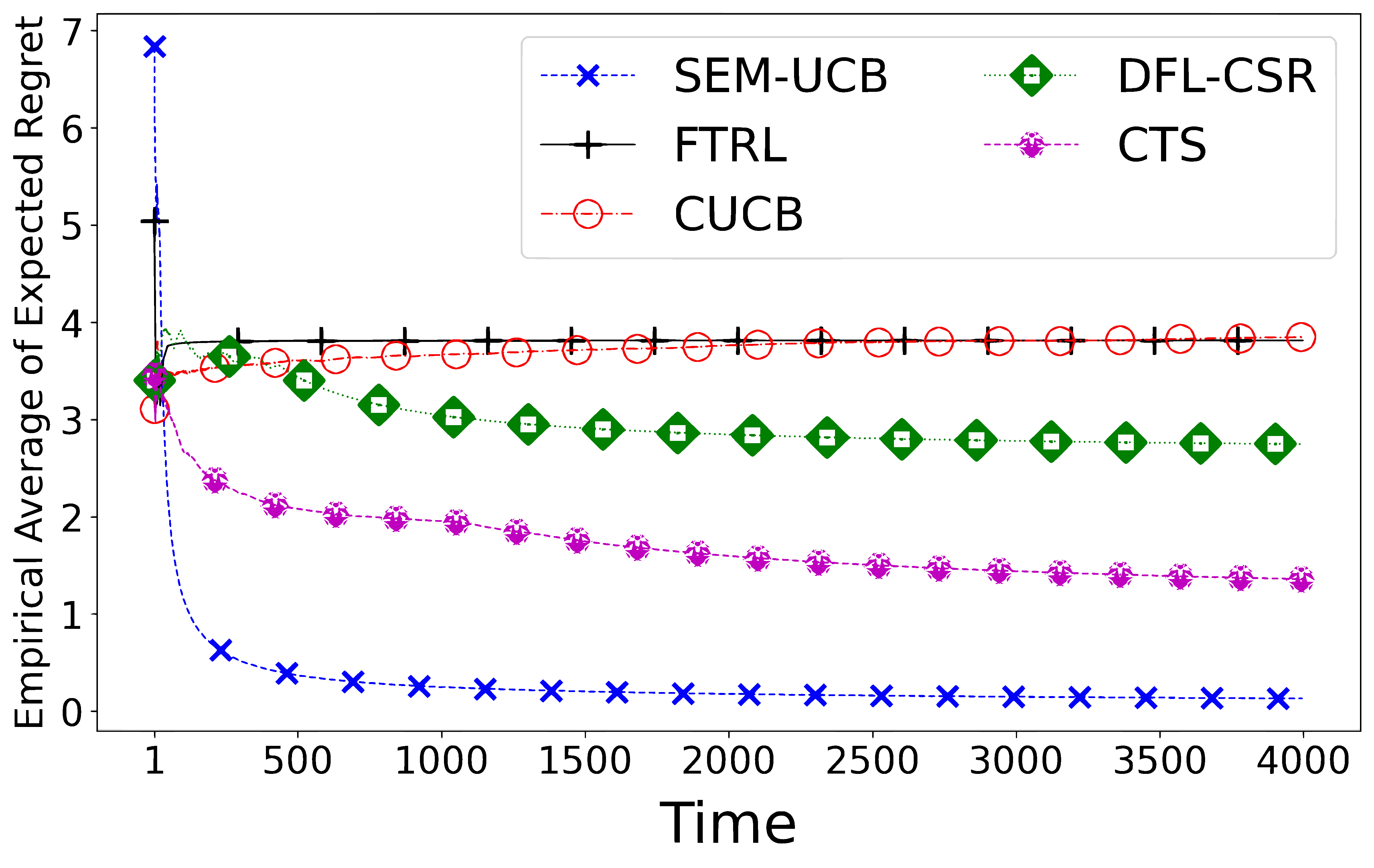}
    }
    \caption{Time-averaged expected regret of different policies.}
    \label{fig:Reg-Synthetic}
\end{figure}
%
%-----------------------------------------

%-----------------------------------------> Real-World data
\subsection{Covid-19 Dataset}
\label{subsec:RealData}
We evaluate our proposed algorithm on the Covid-19 outbreak dataset of daily new infected cases during the pandemic in different regions within Italy.\footnote{https://github.com/pcm-dpc/COVID-19} The dataset fits in our framework as the daily new cases in each region results from the causal spread of Covid-19 among the regions in a country \cite{NEURIPS2020_205e7357} and the region-specific characteristics \cite{guaitoli2021covid}. As the regions differ in their regional characteristics, such as socio-economic and geographical characteristics, each region has a specific exposure risk of Covid-19 infection.  To be consistent with our terminology in \Cref{sec:ProFor}, at each time (day) $t$, we use the \textit{overall reward} $\mathbf{y}_{t}[i]$ to refer to the \textit{overall daily new cases} in region $i$ and use the \textit{instantaneous reward} $\mathbf{b}_{t}[i]$ to refer to the \textit{region-specific daily new cases} in region $i$. Naturally, the overall daily new cases includes the region-specific daily new cases of Covid-19 infection.

Governments around the world strive to track the spread of Covid-19 and find the regions that are contributing the most to the total number of daily new cases in the country \cite{bridgwater2021identifying}. By the end of this experiment, we address this critical problem and highlight that our algorithm is capable of finding the optimal candidate regions for political interventions in order to contain the spread of a contagious disease such as Covid-19.

\paragraph{Data preparation.~} We focus on the recorded daily new cases from $10$ August to $15$ October, $2020$, for $N = 21$ regions within Italy. The Covid-19 dataset only provides us with the overall daily new cases of each region. Hence, in order to apply our algorithm, we need to infer the distribution of region-specific daily new cases for each region. In the following, we describe this process and further pre-processing of the Covid-19 dataset.

According to \cite{bull2021italian}, for the time period from $18$ May to $3$ June, $2020$, all places for work and leisure activities were opened and travelling within regions was permitted while travelling between regions was forbidden. Consequently, during this period, there are no causal effects on the overall daily new cases of each region from other regions. In addition, according to google mobility data \cite{nouvellet2021reduction}, during $4$ weeks prior to $18$ May the mobility was increasing within the regions while travel ban between the regions was still imposed. Hence, we use this expanded period to estimate the underlying distributions of the region-specific daily new cases using a kernel density estimation. Finally, considering that the daily recorded data noticeably fluctuates, a $7$-day moving average was applied to the signals. 
% by averaging the values of that day, the day
% before, and the next day.

We create the region-specific daily new cases for each region by sampling from the estimated distributions. Below, we present the results of applying our algorithm on the pre-processed Covid-19 dataset. Since the data only contains the reported overall daily new cases for a limited time period, care should be exercised in interpreting the results. However, by providing more relevant data, our proposed framework helps towards more accurate detection of the regions that contribute the most to the development of Covid-19.

\paragraph{Learning the structural dependencies.~} Our algorithm learns the ground truth adjacency matrix $\mathbf{A}$ using (\ref{eq:Optimization-A}). As for the choice of regularization function in (\ref{eq:Optimization-A}), we employ Directed Total Variation (DTV) which is a novel application of the Graph Directed Variation (GDV) function \cite{sardellitti2017graph}. DTV regularization function is defined as
\begin{equation}
    \mathrm{g}(\mathbf{A}) = \lambda \sum_{i,j = 1, \dots, N} \mathbf{A}[i,j] \sum_{k= 1, \dots, t} \left [ \mathbf{Y}[i,k] - \mathbf{Y}[j,k] \right]^{+},
\end{equation}
\begin{equation}
    \left[ y \right]^{+} = \text{max} \left\{ y, 0 \right \}.
\end{equation}
The regularization function addresses the smoothness of the entire observations $\mathbf{Y}$ over the underlying directed graph. To be more realistic, since the causal spread of the disease might create cycles, we additionally include cyclic graphs in the search space of the optimization problem (\ref{eq:Optimization-A}).
%-----------------------------------------> Figure
\begin{figure}[t]
    \centerline{
    \includegraphics*[width=0.48\textwidth]{./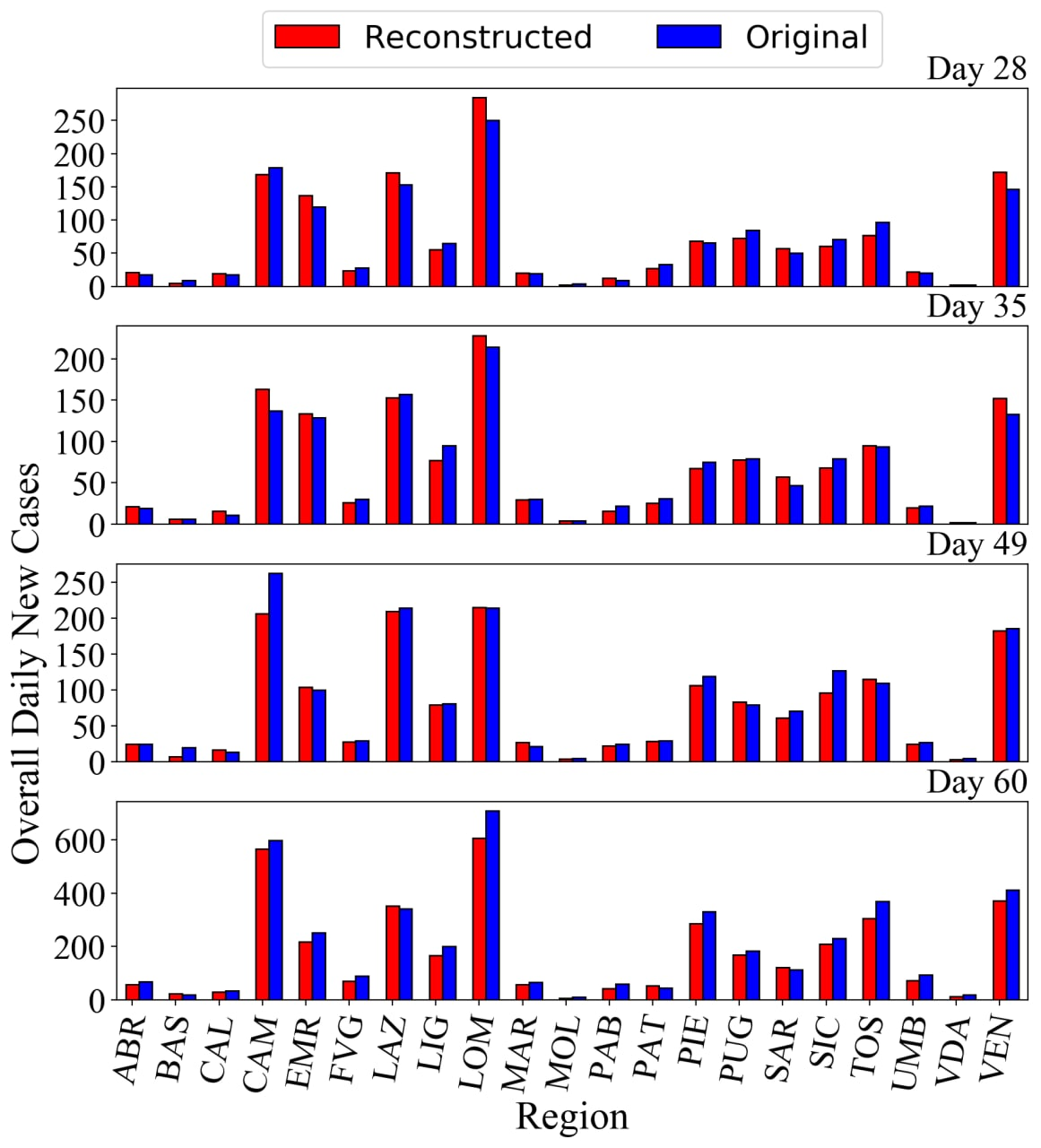}
    }
    \caption{Original overall daily new cases and the corresponding predicted values for different days in the validation set.}
    \label{fig:Error}
\end{figure}
%
%-----------------------------------------

We perform cross-validation technique to tune the regularization parameter $\lambda$. As mentioned before, we work on a limited time period with $T = 66$ days. Thus, we split the data into train and validation sets in 10:1 ratio. More specifically, we split the data into $6$ subsets of $11$ consecutive days. In each subset, one day is chosen uniformly at random to be included in the validation set while the remaining $10$ days are added to the train set. We calculate the prediction error at each time $t$ by
\begin{equation}
\label{eq:Error}
    \textit{Error}(t) = \frac{1}{N K(t)} \sum_{i \in \mathcal{K}(t)} \left\| \mathbf{y}_{i} - \hat{\mathbf{y}}_{i} \right\|_{1},
\end{equation}
where $\mathcal{K}(t)$ is the validation set at time $t$ with cardinality $K(t) = |\mathcal{K}(t)|$ and $\mathbf{y}_{i}$ and $\hat{\mathbf{y}}_{i}$ are the validation data and the corresponding predicted value using the estimated graph for day $i$, respectively. \textbf{\Cref{fig:Error}} compares the ground truth overall daily new cases and the predicted overall daily new cases using the estimated graph on $4$ different days of the Covid-19 outbreak in our validation data. Due to space limitation, we use abbreviations for region names. Table $1$ in Section B.1 of supplementary material lists the abbreviations together with the original names of the regions. We observe that our proposed framework is capable to estimate the data for each region efficiently, that helps the agent to improve its decision-making process in a real-world scenario.

\paragraph{Learning regions with the highest
contribution.~} 
In \textbf{\Cref{fig:SuperArmSelection}}, we show the decision-making process of the agent over time by following the SEM-UCB policy. Dark rectangles represent the $6$ selected regions at each day (time). Based on our framework, we represent the selected regions by our algorithm as those with biggest contributions to the development of Covid-19 during the time interval considered in our experiment. More specifically, we find the regions of Lombardia, Emilia-Romagna, Lazio, Veneto, Piemonte, and Liguria as the ones that contribute the most to the spread of Covid-19 during that period in Italy.

We emphasize that, due to the causal effects among the regions, contribution of each region to the spread of covid-19 differs from its overall daily cases of infection. Thus, the set of regions with the highest contribution does not necessarily equal to the set of regions with the highest total number of daily cases. This is a key aspect of our problem formulation that is addressed by SEM-UCB in \textbf{\Cref{fig:SuperArmSelection}}. We elaborate more on this fact in Section B.3 of supplementary material.
%-----------------------------------------> Figure
\begin{figure}[t]
    \centerline{\includegraphics*[width=.48\textwidth]{./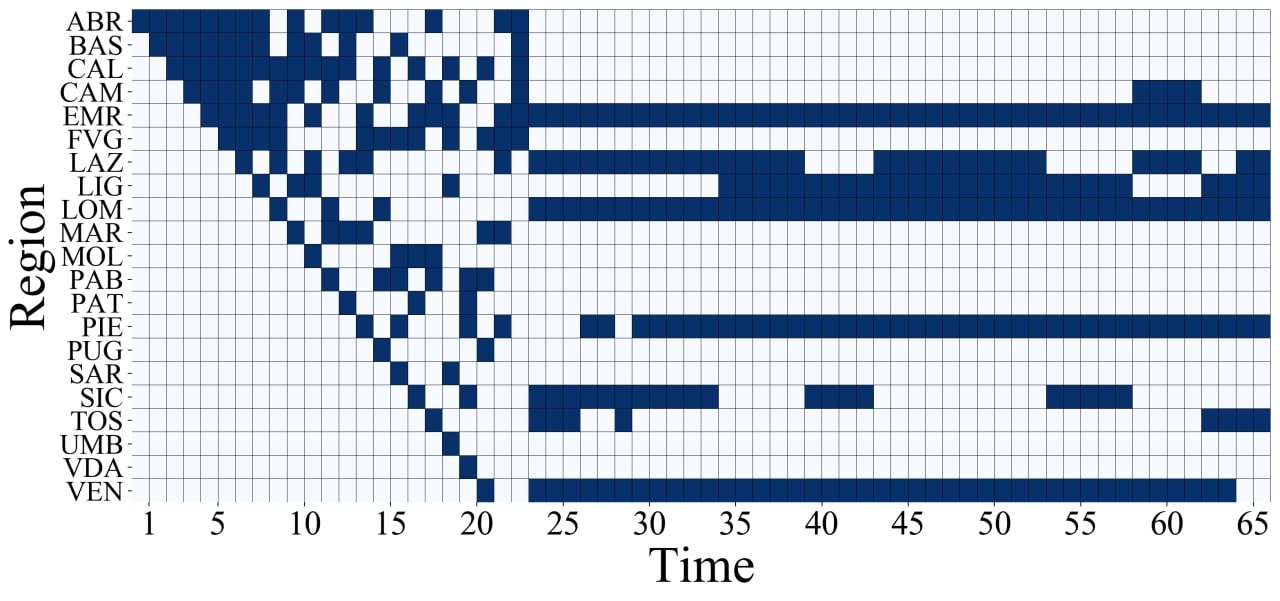}}
    \caption{Selected regions on each day.}
    \label{fig:SuperArmSelection}
\end{figure}
%
%-----------------------------------------

%-----------------------------------------> Section: Conclusion
\section{Conclusion}
\label{sec:Conclusion}
In this paper, we developed a combinatorial semi-bandit framework with causally related rewards, where we modelled the causal relations by a directed graph in a structural equation model. We developed a decision-making policy, namely SEM-UCB, that learns the structural dependencies to improve the decision-making process. We proved that SEM-UCB achieves a sublinear regret bound in time. Our framework is applicable in a number of contexts such as network data analysis of biological networks or financial markets. We applied our method to analyze the development of Covid-19. The experiments showed that SEM-UCB outperforms several state-of-the-art combinatorial semi-bandit algorithms. Future research directions would be to extend the current framework to deal with piece-wise stationary environments where the causal graph and/or the expected instantaneous rewards of the base arms undergo abrupt changes over time.

%------------------------------%
\section*{Acknowledgements}This work was partially funded by the Deutsche Forschungsgemeinschaft (DFG, German Research Foundation) under Germany’s Excellence Strategy – EXC number 2064/1 – Project number 390727645, and by Grant 01IS20051 from the German Federal Ministry of Education and Research (BMBF). We are grateful to Sergio Barbarossa and Sofien Dhouib for fruitful discussions and comments.
\newpage
%-----------------------------------------> References
%% The file named.bst is a bibliography style file for BibTeX 0.99c
\bibliographystyle{named}
\bibliography{thebibliography}

%--------------
\appendix
\section{PROOF OF THEOREM 1}
\label{sec:ProofThm1}
%
%-------------------------------> Notations
\subsection{NOTATIONS}
\label{subsec:Notation}
Before proceeding to the proof, in the following we introduce some important notations together with their definitions.

We define the \textit{index set} of a decision vector $\mathbf{x} \in \mathcal{X}$ by $\mathcal{I}(\mathbf{x}) = \left\{ i ~|~ \mathbf{x}[i] \neq 0, \forall i \in [N] \right\}$. The confidence bound of base arm $i$ at time $t$ is defined as $\mathbf{C}_{t}[i] = \sqrt{\frac{(s+1) \ln{t}}{\mathbf{m}_{t}[i]}}$. At each time $t$, we collect the empirical average of instantaneous rewards $\hat{\boldsymbol{\beta}}_{t}[i]$ and the calculated confidence bounds $\mathbf{C}_{t}[i]$ of all base arms $i \in [N]$ in vectors $\hat{\boldsymbol{\beta}}_{t}$ and $\mathbf{C}_{t}$, respectively. We have $\mathbf{E}_{t} = \hat{\boldsymbol{\beta}}_{t} + \mathbf{C}_{t}$. For ease of presentation, in the sequel, we use the following equivalence ${\bf 1}^{\top} (\mathbf{I} - \hat{\mathbf{A}}_{t-1})^{-1} \text{diag}(\mathbf{E}_{t-1}) \mathbf{x}_{t} = {\bf 1}^{\top} (\mathbf{I} - \hat{\mathbf{A}}_{t-1})^{-1} \text{diag}(\mathbf{x}_{t}) \mathbf{E}_{t-1}$.
At each time $t$, we define the \textit{selection index} for a decision vector $\mathbf{x} \in \mathcal{X}$ as $I_{t}(\mathbf{x}) = \mathbf{1}^{\top}({\bf I} - \hat{\mathbf{A}}_{t-1})^{-1} \textup{diag}(\mathbf{x}) \mathbf{E}_{t-1}$. To simplify the notation, sometimes we drop the time index $t$ in $\mathbf{m}_{t}[i]$ and use $\mathbf{m}[i]$ to denote the number of times that the base arm $i$ has been observed up to the current time instance.

For any $\mathbf{x} \in \mathcal{X}$, we use the counter $\mathcal{T}_{\mathbf{x}}(t)$ to represent the total number of times the decision vector $\mathbf{x}$ is selected up to time $t$. Finally, for each base arm $i \in [N]$, we define a counter $\mathscr{T}_{i}(t)$ which is updated as follows. At each time $t$ after the initialization phase that a suboptimal decision vector $\mathbf{x}_{t}$ is selected, we have at least one base arm $i \in [N]$ such that $i = \underset{i \in \mathcal{I}(\mathbf{x}_{t})}{\textup{argmin}} ~\mathbf{m}_{t}[i]$. In this case, if the base arm $i$ is unique, we increment $\mathscr{T}_{i}(t)$ by 1. If there are more than one such base arm, we break the tie and select one of them arbitrarily to increment its corresponding counter.
%-------------------------------
%-------------------------------> Auxiliary
\subsection{Auxiliary Results }
\label{subsec:aux}
We use the following lemma in the proof of Theorem 1.
\paragraph{Lemma 1.}{\textup{(\cite{Azuma67:WSC})}}
\label{lem:Hoeffding}
Let $z_{1}, z_{2}, \dots, z_{m}$ be random variables and $z_{i} \in [0,1]$, $\forall i$. Moreover, $\mathbb{E}[z_{t} | z_{1}, \dots, z_{t-1}] = \alpha$, for all $t = 1, \dots, m$. Then, for all $D \geq 0$,
\begin{equation}
     \mathbb{P} {\Bigg[} {\Big |} \sum_{i = 1}^{m} z_{i} - m \alpha {\Big |} \geq D {\Bigg]} \leq e^{- \frac{2 D^{2}}{m} }.
\end{equation}
%
%\end{lemma}
%
%-------------------------------
%-------------------------------> Proof of Thm 1
\subsection{PROOF}
\label{subsec:Proof1}
We start by rewriting the expected regret as
\begin{align} \nonumber
    \mathcal{R}_{T}(\mathcal{X}) 
    &= T \mu(\mathbf{x}^{\ast}) - \sum_{t = 1}^{T} \mu(\mathbf{x}_{t}) \\
    &= \sum_{\mathbf{x}:\mu(\mathbf{x}) < \mu(\mathbf{x}^{\ast})}^{} \Delta(\mathbf{x}) \mathbb{E} [\mathcal{T}_{\mathbf{x}}(T)].
\end{align}
Based on the definition of the counters $\mathscr{T}_{i}(t)$ for the base arms $i \in [N]$, at each time $t$ that a suboptimal decision vector is selected, only one of such counters is incremented by $1$. Thus, we have \cite{Gai12:CNO}
\begin{align}
    \mathbb{E}\left[ \sum_{\mathbf{x}:\mu(\mathbf{x}) < \mu(\mathbf{x}^{\ast})}^{} \mathcal{T}_{\mathbf{x}}(t) \right]
    = \mathbb{E} \left[\sum_{i = 1}^{N} \mathscr{T}_{i}(t)\right ],
\end{align}
which implies that
\begin{align}
    \sum_{\mathbf{x}:\mu(\mathbf{x}) < \mu(\mathbf{x}^{\ast})}^{} \mathbb{E} \left[\mathcal{T}_{\mathbf{x}}(t)\right]
    = \sum_{i = 1}^{N} \mathbb{E} \left[\mathscr{T}_{i}(t)\right].
\end{align}
Therefore, we observe that
\begin{align} \nonumber
    \mathcal{R}_{T}(\mathcal{X})
    &= \sum_{\mathbf{x}:\mu(\mathbf{x}) < \mu(\mathbf{x}^{\ast})}^{} \Delta(\mathbf{x}) \mathbb{E} [\mathcal{T}_{\mathbf{x}}(T)] \\
    &\stackrel{(\ast)}{\leq} \Delta_{\max} \sum_{i = 1}^{N} \mathbb{E} [\mathscr{T}_{i}(T)],
\end{align}
where $(\ast)$ follows from the definition of $\Delta_{\max}$.

Let $\mathbbm{I}_{i}(t)$ denote the indicator function which is equal to $1$ if $\mathscr{T}_{i}(t)$ is increased by $1$ at time $t$, and is $0$ otherwise. Therefore,
\begin{align}
    \mathscr{T}_{i}(T) = \sum_{t = N+1}^{T} \mathbbm{1}\left\{ \mathbbm{I}_{i}(t) = 1 \right\}.
\end{align}
If $\mathbbm{I}_{i}(t) = 1$, it means that a suboptimal decision vector $\mathbf{x}_{t}$ is selected at time $t$. In this case, $\mathbf{m}_{t}[i] = \min \left\{ \mathbf{m}_{t}[j] | j \in \mathcal{I}(\mathbf{x}_{t}) \right\}$. Let $l = \left \lceil {\frac{4 (s+1) \ln{T}}{ (\frac{\Delta_{\min}}{s w_{\max}})^{2} }} \right \rceil$. Then,
\begin{align} \nonumber
   &\mathscr{T}_{i}(T) 
   = \sum_{t = N+1}^{T} \mathbbm{1}\left\{ \mathbbm{I}_{i}(t) = 1 \right\} \\ \nonumber
   &\leq l + 
    \sum_{t = N+1}^{T} \mathbbm{1}\left\{ \mathbbm{I}_{i}(t) = 1 ~\&~ \mathscr{T}_{i}(t-1) \geq l \right\} \\ \nonumber
   &\leq l + \sum_{t = N+1}^{T} \mathbbm{1}\left\{ I_{t}(\mathbf{x}^{\ast}) \leq I_{t}(\mathbf{x}_{t}) ~\&~ \mathscr{T}_{i}(t-1) \geq l\right \} \\ \nonumber 
   &= l + \sum_{t = N+1}^{T} \mathbbm{1}\{ {\bf 1}^{\top} (\mathbf{I} - \hat{\mathbf{A}}_{t - 1})^{-1} \text{diag}(\mathbf{x}^{\ast}) \mathbf{E}_{t-1} \\  \nonumber
   &\hspace{8mm} \leq {\bf 1}^{\top} (\mathbf{I} - \hat{\mathbf{A}}_{t - 1})^{-1} \text{diag}(\mathbf{x}_{t}) \mathbf{E}_{t-1} ~\&~ \mathscr{T}_{i}(t-1) \geq l \} \\ \nonumber
   &= l + \sum_{t = N}^{T} \mathbbm{1}\{ {\bf 1}^{\top} (\mathbf{I} - \hat{\mathbf{A}}_{t})^{-1} \text{diag}(\mathbf{x}^{\ast}) \mathbf{E}_{t} \\
   &\hspace{8mm}  \leq {\bf 1}^{\top} (\mathbf{I} - \hat{\mathbf{A}}_{t})^{-1} \text{diag}(\mathbf{x}_{t+1}) \mathbf{E}_{t} ~\&~ \mathscr{T}_{i}(t) \geq l \}.
\end{align}
Based on the definition of $\mathscr{T}_{i}(t)$, we have $\mathscr{T}_{i}(t) \leq \mathbf{m}_{t}[i]$, $\forall i \in [N]$. Therefore, when $\mathscr{T}_{i}(t) \geq l$, the following holds \cite{Gai12:CNO}.
\begin{align}
    l \leq \mathscr{T}_{i}(t) \leq \mathbf{m}_{t}[j], ~~~~ \forall j \in \mathcal{I}(\mathbf{x}_{t+1}).
\end{align}
Let $\mathbf{v}_{t+1}^{\top} = {\bf 1}^{\top} (\mathbf{I} - \hat{\mathbf{A}}_{t})^{-1} \text{diag}(\mathbf{x}^{\ast})$ and $\mathbf{u}_{t+1}^{\top} = {\bf 1}^{\top} (\mathbf{I} - \hat{\mathbf{A}}_{t})^{-1} \text{diag}(\mathbf{x}_{t+1})$. We order the elements in sets $\mathcal{I}(\mathbf{x}^{\ast})$ and $\mathcal{I}(\mathbf{x}_{t+1})$ arbitrarily. In the following, our results are independent of the way we order these sets. Let $v_{k}$, $k = 1, \dots, |\mathcal{I}(\mathbf{x}^{\ast})| \leq s$, represent the $k$th element in $\mathcal{I}(\mathbf{x}^{\ast})$ and $u_{k}$, $k = 1, \dots, |\mathcal{I}(\mathbf{x}_{t+1})| \leq s$, represent the $k$th element in $\mathcal{I}(\mathbf{x}_{t+1})$. Hence, we have
\begin{align} \nonumber
\label{eq:event}
   &\mathscr{T}_{i}(T)
   \leq l + \sum_{t = N}^{T} \mathbbm{1} {\Bigg\{} \min_{0 < \mathbf{m}[v_{1}], \dots, \mathbf{m}[v_{|\mathcal{I}(\mathbf{x}^{\ast})|}] \leq t} \\ \nonumber
   &\hspace{35mm}\sum_{j=1}^{|\mathcal{I}(\mathbf{x}^{\ast})|} \mathbf{v}_{t+1}^{\top}[v_{j}] (\hat{\boldsymbol{\beta}}_{t}[v_{j}] + \mathbf{C}_{t}[v_{j}]) \leq \\ \nonumber
   &\max_{l \leq \mathbf{m}[u_{1}], \dots,
   \mathbf{m}[u_{|\mathcal{I}(\mathbf{x}_{t+1})|}] \leq t} 
   \sum_{j=1}^{|\mathcal{I}(\mathbf{x}_{t+1})|} \mathbf{u}_{t+1}^{\top}[u_{j}] (\hat{\boldsymbol{\beta}}_{t}[u_{j}] + \mathbf{C}_{t}[u_{j}]) {\Bigg \}} \\ \nonumber
   &\leq l + \sum_{t=1}^{\infty} \sum_{m_{v_{1}}=1}^{t} \dots \sum_{m_{v_{|\mathcal{I}(\mathbf{x}^{\ast})|}}=1}^{t} \sum_{m_{u_{1}}=l}^{t} \dots \sum_{m_{u_{|\mathcal{I}(\mathbf{x}_{t+1})|}}=l}^{t} \\ \nonumber
   &\hspace{10mm}\mathbbm{1} {\Bigg \{} 
%   \underbrace{
   \sum_{j=1}^{|\mathcal{I}(\mathbf{x}^{\ast})|} \mathbf{v}_{t+1}^{\top}[v_{j}] (\hat{\boldsymbol{\beta}}_{t}[v_{j}] + \mathbf{C}_{t}[v_{j}]) \\ 
   &\hspace{25mm}\leq \sum_{j=1}^{|\mathcal{I}(\mathbf{x}_{t+1})|} \mathbf{u}_{t+1}^{\top}[u_{j}] (\hat{\boldsymbol{\beta}}_{t}[u_{j}] + \mathbf{C}_{t}[u_{j}])
%   }_{\mathcal{P}} 
   {\Bigg \}}. 
\end{align}
We define the Event $\mathcal{P}$ as
\begin{align} \nonumber
\label{eq:eventp-separated}
    \sum_{j=1}^{|\mathcal{I}(\mathbf{x}^{\ast})|} \mathbf{v}_{t+1}^{\top}&[v_{j}] (\hat{\boldsymbol{\beta}}_{t}[v_{j}] + \mathbf{C}_{t}[v_{j}]) \\
   &\leq \sum_{j=1}^{|\mathcal{I}(\mathbf{x}_{t+1})|} \mathbf{u}_{t+1}^{\top}[u_{j}] (\hat{\boldsymbol{\beta}}_{t}[u_{j}] + \mathbf{C}_{t}[u_{j}]).
\end{align}
If the Event $\mathcal{P}$ in (\ref{eq:eventp-separated}) is true, it implies that at least one of the following events must be true.
\begin{align} \label{eq:part1} \nonumber
    {\bf 1}^{\top} (\mathbf{I} - \hat{\mathbf{A}}_{t})^{-1} \text{diag}&(\mathbf{x}^{\ast}) (\hat{\boldsymbol{\beta}}_{t} + \mathbf{C}_{t})  \\
    &\hspace{-4mm}\leq {\bf 1}^{\top} (\mathbf{I} - \mathbf{A})^{-1} \text{diag}(\mathbf{x}^{\ast})\boldsymbol{\beta}, \\ \nonumber \label{eq:part2}
    {\bf 1}^{\top} (\mathbf{I} - \hat{\mathbf{A}}_{t})^{-1} \text{diag}&(\mathbf{x}_{t+1}) (\hat{\boldsymbol{\beta}}_{t} - \mathbf{C}_{t}) \\
    &\hspace{-4mm}\geq {\bf 1}^{\top} (\mathbf{I} - \mathbf{A})^{-1} \text{diag}(\mathbf{x}_{t+1})\boldsymbol{\beta}, \\ \nonumber
    {\bf 1}^{\top} (\mathbf{I} - \mathbf{A})^{-1} \text{diag}(\mathbf{x}^{\ast})\boldsymbol{\beta}
    &<
    {\bf 1}^{\top} (\mathbf{I} - \mathbf{A})^{-1} \text{diag}(\mathbf{x}_{t+1})\boldsymbol{\beta} \\ \label{eq:part3}
    &\hspace{-8mm}+ 2 {\bf 1}^{\top} (\mathbf{I} - \hat{\mathbf{A}}_{t})^{-1} \text{diag}(\mathbf{x}_{t+1}) \mathbf{C}_{t}.
    \end{align}
First, we consider (\ref{eq:part1}). Based on our problem formulation and proposed solution, we know that matrices $\mathbf{A}$ and $\hat{\mathbf{A}}_{t}$ are nilpotent with index $N$. Thus, $\mathbf{A}^{N} = \mathbf{0}_{N \times N}$ and $\hat{\mathbf{A}}_{t}^{N} = \mathbf{0}_{N \times N}$. Hence, we can write the Taylor's series of $(\mathbf{I} - \mathbf{A})^{-1}$ and $(\mathbf{I} - \hat{\mathbf{A}}_{t})^{-1}$ as
\begin{align}
\label{eq:nilpotentA}
    \hspace{-7mm}(\mathbf{I} - \mathbf{A})^{-1} = \mathbf{I} + \mathbf{A} + \mathbf{A}^{2} + \dots + \mathbf{A}^{N-1},
\end{align}
and
\begin{align}
\label{eq:nilpotentAhat}
    (\mathbf{I} - \hat{\mathbf{A}}_{t})^{-1} = \mathbf{I} + \hat{\mathbf{A}}_{t} + \hat{\mathbf{A}}_{t}^{2} + \dots + \hat{\mathbf{A}}_{t}^{N-1},
\end{align}
respectively. Substituting (\ref{eq:nilpotentA}) and (\ref{eq:nilpotentAhat}) in (\ref{eq:part1}) results in
\begin{align} \nonumber
    {\bf 1}^{\top} (\mathbf{I} + \hat{\mathbf{A}}_{t} &+ \hat{\mathbf{A}}_{t}^{2} + \dots + \hat{\mathbf{A}}_{t}^{N-1}) \text{diag}(\mathbf{x}^{\ast}) (\hat{\boldsymbol{\beta}}_{t} + \mathbf{C}_{t}) \\
    &\leq {\bf 1}^{\top} (\mathbf{I} + \mathbf{A} + \mathbf{A}^{2} + \dots + \mathbf{A}^{N-1}) \text{diag}(\mathbf{x}^{\ast}) \boldsymbol{\beta}.
\end{align}
For $j = 1, \dots N$, we find the upper bound for
\begin{align} %\nonumber
\label{eq:part1-j}
    \mathbb{P} \left[ {\bf 1}^{\top} \hat{\mathbf{A}}_{t}^{j-1} \text{diag}(\mathbf{x}^{\ast}) (\hat{\boldsymbol{\beta}}_{t} + \mathbf{C}_{t}) \leq
    {\bf 1}^{\top} \mathbf{A}^{j-1} \text{diag}(\mathbf{x}^{\ast}) \boldsymbol{\beta} \right].
\end{align}
We consider the following Event $\mathcal{E}$.
%
% \begin{align} \nonumber
    % {\bf 1}^{\top} \hat{\mathbf{A}}_{t+1}^{j-1} \text{diag}(\mathbf{x}^{\ast}) (\hat{\boldsymbol{\beta}}(t) + \mathbf{C}(t)) 
    % - {\bf 1}^{\top} \hat{\mathbf{A}}_{t+1}^{j-1} \text{diag}(\mathbf{x}^{\ast})\boldsymbol{\beta}
    % &\leq {\bf 1}^{\top} A^{j-1} \text{diag}(\mathbf{x}^{\ast})\boldsymbol{\beta} 
    % - {\bf 1}^{\top} \hat{\mathbf{A}}_{t+1}^{j-1} \text{diag}(\mathbf{x}^{\ast})\boldsymbol{\beta} \\
    % {\bf 1}^{\top} \hat{\mathbf{A}}_{t+1}^{j-1} \text{diag}(\mathbf{x}^{\ast}) (\hat{\boldsymbol{\beta}}(t) + \mathbf{C}(t) -\boldsymbol{\beta}) &\leq {\bf 1}^{\top} (A^{j-1} - \hat{\mathbf{A}}_{t+1}^{j-1}) \text{diag}(\mathbf{x}^{\ast})\boldsymbol{\beta}
% \end{align}
%
%
\begin{align} \nonumber
    {\bf 1}^{\top} \hat{\mathbf{A}}_{t}^{j-1} \text{diag}&(\mathbf{x}^{\ast}) (\hat{\boldsymbol{\beta}}_{t} + \mathbf{C}_{t}) 
    + {\bf 1}^{\top} \hat{\mathbf{A}}_{t}^{j-1} \text{diag}(\mathbf{x}^{\ast}) \boldsymbol{\beta} \\
    &\leq {\bf 1}^{\top} \hat{\mathbf{A}}_{t}^{j-1} \text{diag}(\mathbf{x}^{\ast}) \boldsymbol{\beta} + {\bf 1}^{\top} \mathbf{A}^{j-1} \text{diag}(\mathbf{x}^{\ast}) \boldsymbol{\beta}.
\end{align}
If $\mathcal{E}$ is true, then at least one of the following must hold.
\begin{align} %\nonumber
    &\underbrace{{\bf 1}^{\top} \hat{\mathbf{A}}_{t}^{j-1} \text{diag}(\mathbf{x}^{\ast}) (\hat{\boldsymbol{\beta}}_{t} + \mathbf{C}_{t})
    \leq {\bf 1}^{\top} \hat{\mathbf{A}}_{t}^{j-1} \text{diag}(\mathbf{x}^{\ast})\boldsymbol{\beta}}_{\mathcal{I}}, \\
    &\underbrace{{\bf 1}^{\top} \hat{\mathbf{A}}_{t}^{j-1} \text{diag}(\mathbf{x}^{\ast}) \boldsymbol{\beta}
    \leq {\bf 1}^{\top} \mathbf{A}^{j-1} \text{diag}(\mathbf{x}^{\ast})\boldsymbol{\beta}}_{\mathcal{II}}.
\end{align}
Therefore, we have
\begin{align} %\nonumber
\label{eq:eventE}
    \mathbb{P} \left[ \mathcal{E} \right] \leq \mathbb{P} \left[ \mathcal{I} \right] + \mathbb{P} \left[ \mathcal{II} \right].
\end{align}
Let $\mathbf{y}_{t}^{\top} = {\bf 1}^{\top} \hat{\mathbf{A}}_{t}^{j-1} \text{diag}(\mathbf{x}^{\ast})$. If Event $\mathcal{I}$ is true, then at least one of the following must hold.
\begin{align} %\nonumber
    \mathbf{y}_{t}^{\top}[v_{1}] (\hat{\boldsymbol{\beta}}_{t}[v_{1}] + \mathbf{C}_{t}[v_{1}])
    \leq \mathbf{y}_{t}^{\top}[v_{1}] &\boldsymbol{\beta}[v_{1}], \\
    \mathbf{y}_{t}^{\top}[v_{2}] (\hat{\boldsymbol{\beta}}_{t}[v_{2}] + \mathbf{C}_{t}[v_{2}])
    \leq \mathbf{y}_{t}^{\top}[v_{2}] &\boldsymbol{\beta}[v_{2}], \\ \nonumber
    &\vdots \\ \nonumber
    \mathbf{y}_{t}^{\top}[v_{|\mathcal{I}(\mathbf{x}^{\ast})|}] (\hat{\boldsymbol{\beta}}_{t}[v_{|\mathcal{I}(\mathbf{x}^{\ast})|}] + \mathbf{C}_{t}[v_{|\mathcal{I}(\mathbf{x}^{\ast})|}]) \\
    &\hspace{-20mm}\leq \mathbf{y}_{t}^{\top}[v_{|\mathcal{I}(\mathbf{x}^{\ast})|}] \boldsymbol{\beta}[v_{|\mathcal{I}(\mathbf{x}^{\ast})|}].
\end{align}
For $k = 1, \dots, |\mathcal{I}(\mathbf{x}^{\ast})|$, we have
\begin{align} \nonumber
    \mathbb{P}{\Big[} \mathbf{y}_{t}^{\top}[v_{k}] &(\hat{\boldsymbol{\beta}}_{t}[v_{k}] + \mathbf{C}_{t}[v_{k}])
    \leq \mathbf{y}_{t}^{\top}[v_{k}] \boldsymbol{\beta}[v_{k}]{\Big]} \\ \nonumber
    &\stackrel{(a)}{=} \mathbb{P}{\Big[} \mathbf{m}_{t}[v_{k}] (\hat{\boldsymbol{\beta}}_{t}[v_{k}] + \mathbf{C}_{t}[v_{k}]) \leq \mathbf{m}_{t}[v_{k}] \boldsymbol{\beta}[v_{k}]{\Big]}  \\ \nonumber
    &\stackrel{(b)}{\leq} e^{-(2/\mathbf{m}_{t}[v_{k}]) \mathbf{m}_{t}[v_{k}]^{2} \mathbf{C}_{t}[v_{k}]^{2}} \\ \nonumber
    &\stackrel{(c)}{=} e^{-2 (s+1) \ln t} \\
    &= t^{-2 (s+1)},
\end{align}
where $(a)$ holds since $\mathbf{y}_{t}^{\top}[v_{k}] \geq 0$, $\forall k$, $(b)$ follows from Lemma 1, and $(c)$ results from the definition of $\mathbf{C}_{t}$. Hence, for Event $\mathcal{I}$, we conclude that
\begin{align}
    \mathbb{P}\left[\mathcal{I}\right] \leq |\mathcal{I}(\mathbf{x}^{\ast})| t^{-2 (s+1)} \leq s t^{-2 (s+1)}.
\end{align}
Now, we consider Event $\mathcal{II}$. Based on Theorem 1 in \cite{Bazerque13:IOS}, we know that we can identify the adjacency matrix $A$ uniquely by $N$ samples gathered during the initialization period of our proposed algorithm. This means that with probability $1$, after the time point $\theta = N < \infty$, $\hat{\mathbf{A}}_{t} = \mathbf{A}$ holds for all $t > \theta$. Therefore, for $t > N$, Event $\mathcal{II}$ holds with probability $1$.

Combining the aforementioned results with (\ref{eq:eventE}), we find the upper bound for (\ref{eq:part1-j}) as
\begin{align} \nonumber
\label{eq:part1-jj}
    \mathbb{P} {\Big[} {\bf 1}^{\top} \hat{\mathbf{A}}_{t}^{j-1} \text{diag}(\mathbf{x}^{\ast}) (\hat{\boldsymbol{\beta}}_{t} + \mathbf{C}_{t})  
    \leq
    {\bf 1}^{\top} \mathbf{A}^{j-1} &\text{diag}(\mathbf{x}^{\ast})\boldsymbol{\beta} {\Big]}  \\
    &\leq st^{-2(s+1)},
\end{align}
for each $j = 1, \dots, N$. Since $\hat{\mathbf{A}}_{t} = \mathbf{A}$, $\forall t > N$ and the length of the largest path in the graph is $p$, we can rewrite (\ref{eq:nilpotentA}) and (\ref{eq:nilpotentAhat}) as \cite{duncan2004powers}
\begin{align}
\label{eq:nilpotentA-p}
    (\mathbf{I} - \mathbf{A})^{-1} = \mathbf{I} + \mathbf{A} + \mathbf{A}^{2} + \dots + \mathbf{A}^{p},
\end{align}
and
\begin{align}
\label{eq:nilpotentAhat-p}
    (\mathbf{I} - \hat{\mathbf{A}}_{t})^{-1} = \mathbf{I} + \hat{\mathbf{A}}_{t} + \hat{\mathbf{A}}_{t}^{2} + \dots + \hat{\mathbf{A}}_{t}^{p},
\end{align}
respectively. Therefore, by using (\ref{eq:nilpotentA-p}) and (\ref{eq:nilpotentAhat-p}) in place of (\ref{eq:nilpotentA}) and (\ref{eq:nilpotentAhat}), and based on (\ref{eq:part1-jj}), the following holds for (\ref{eq:part1}).
\begin{align} \nonumber
\label{eq:part1-done}
    \mathbb{P} {\Big[} {\bf 1}^{\top} (\mathbf{I} - &\hat{\mathbf{A}}_{t})^{-1} \text{diag}(\mathbf{x}^{\ast}) (\hat{\boldsymbol{\beta}}_{t} + \mathbf{C}_{t}) \\
    &\leq {\bf 1}^{\top} (\mathbf{I} - \mathbf{A})^{-1} \text{diag}(\mathbf{x}^{\ast})\boldsymbol{\beta} {\Big]}  \leq s^{p}t^{-2p(s+1)}.
\end{align}
For (\ref{eq:part2}), we have similar results as follows.
\begin{align} \nonumber
\label{eq:part2-done}
    \mathbb{P} {\Big[}  {\bf 1}^{\top} (\mathbf{I} - &\hat{\mathbf{A}}_{t})^{-1} \text{diag}(\mathbf{x}_{t+1}) (\hat{\boldsymbol{\beta}}_{t} - \mathbf{C}_{t}) \\
    &\geq {\bf 1}^{\top} (\mathbf{I} - \mathbf{A})^{-1} \text{diag}(\mathbf{x}_{t+1})\boldsymbol{\beta} {\Big]}  \leq s^{p}t^{-2p(s+1)}.
\end{align}
Finally, we consider (\ref{eq:part3}). We have
\begin{align} \nonumber
\label{eq:part3-done}
    &{\bf 1}^{\top} (\mathbf{I} - \mathbf{A})^{-1} \text{diag}(\mathbf{x}^{\ast})\boldsymbol{\beta}
    - {\bf 1}^{\top} (\mathbf{I} - \mathbf{A})^{-1} \text{diag}(\mathbf{x}_{t+1})\boldsymbol{\beta} \\ \nonumber
    &\hspace{30mm}- 2 {\bf 1}^{\top} (\mathbf{I} - \hat{\mathbf{A}}_{t})^{-1} \text{diag}(\mathbf{x}_{t+1}) \mathbf{C}_{t} \\ \nonumber
    &\stackrel{(a)}{=} {\bf 1}^{\top} (\mathbf{I} - \mathbf{A})^{-1} \text{diag}(\mathbf{x}^{\ast})\boldsymbol{\beta}
    - {\bf 1}^{\top} (\mathbf{I} - \mathbf{A})^{-1} \text{diag}(\mathbf{x}_{t+1})\boldsymbol{\beta}  \\ \nonumber
    &\hspace{30mm}- 2 \sum_{j: j \in \mathcal{I}(\mathbf{x}_{t+1})}^{} \mathbf{w}_{t}^{\top}[j] \mathbf{C}_{t}[j] \\ \nonumber
    &\stackrel{(b)}{=} {\bf 1}^{\top} (\mathbf{I} - \mathbf{A})^{-1} \text{diag}(\mathbf{x}^{\ast})\boldsymbol{\beta}
    - {\bf 1}^{\top} (\mathbf{I} - \mathbf{A})^{-1} \text{diag}(\mathbf{x}_{t+1})\boldsymbol{\beta} \\ \nonumber
    &\hspace{30mm}- 2 \sum_{j: j \in \mathcal{I}(\mathbf{x}_{t+1})}^{} \mathbf{w}_{t}^{\top}[j] \sqrt{\frac{(s+1) \ln{t}}{\mathbf{m}_{t}[j]}} \\ \nonumber
    &\stackrel{(c)}{\geq} {\bf 1}^{\top} (\mathbf{I} - \mathbf{A})^{-1} \text{diag}(\mathbf{x}^{\ast})\boldsymbol{\beta}
    - {\bf 1}^{\top} (\mathbf{I} - \mathbf{A})^{-1} \text{diag}(\mathbf{x}_{t+1})\boldsymbol{\beta} \\ \nonumber
    &\hspace{30mm}- 2 s w_{\max} \sqrt{\frac{(s+1) \ln{T}}{l}} \\ \nonumber
    &\stackrel{(d)}{\geq} {\bf 1}^{\top} (\mathbf{I} - \mathbf{A})^{-1} \text{diag}(\mathbf{x}^{\ast})\boldsymbol{\beta} - {\bf 1}^{\top} (\mathbf{I} - \mathbf{A})^{-1} \text{diag}(\mathbf{x}_{t+1})\boldsymbol{\beta} \\ \nonumber
    &\hspace{4mm}- \Delta_{\min}
    \\ \nonumber
    &\stackrel{(e)}{\geq} {\bf 1}^{\top} (\mathbf{I} - \mathbf{A})^{-1} \text{diag}(\mathbf{x}^{\ast})\boldsymbol{\beta} - {\bf 1}^{\top} (\mathbf{I} - \mathbf{A})^{-1} \text{diag}(\mathbf{x}_{t+1})\boldsymbol{\beta} \\ 
    &\hspace{4mm}- \Delta(\mathbf{x}_{t+1}) = 0,
\end{align}
where in $(a)$ and $(c)$ we used the definition of $\mathbf{w}_{t}^{\top}$ and $w_{\max}$, respectively. Moreover, in $(b)$ and $(d)$, we substituted the value for $\mathbf{C}_{t}[j]$ and $l$, respectively. $(e)$ follows from the definition of $\Delta_{\min}$. Hence, we conclude that (\ref{eq:part3}) never happens.

By using (\ref{eq:part1-done}), (\ref{eq:part2-done}), and (\ref{eq:part3-done}), we achieve the following.
% Summarizing all the above results yields
%
\begin{align} \nonumber
\mathbb{E}&[\mathscr{T}_{i}(T)] 
\leq \left \lceil {\frac{4 (s+1) \ln{T}}{ (\frac{\Delta_{\min}}{s w_{\max}})^{2} }} \right \rceil \\ \nonumber
&+ \sum_{t=1}^{\infty} {\Bigg[} \sum_{m_{v_{1}}=1}^{t} \dots \sum_{m_{v_{s}}=1}^{t} \sum_{m_{u_{1}}=l}^{t} \dots \sum_{m_{u_{s}}=l}^{t} 2 s^{p} t^{-2 p (s+1)} {\Bigg]} \\ \nonumber
&\leq \frac{4 w_{\max}^{2} s^{2} (s+1) \ln{T}}{\Delta_{\min}^{2}} + 1 + s^{p} \sum_{t=1}^{\infty} 2t^{-2} \\
&\leq \frac{4 w_{\max}^{2} s^{2} (s+1) \ln{T}}{\Delta_{\min}^{2}} + 1 + \frac{\pi^{2}}{3} s^{p}.
\end{align}
Therefore, the expected regret is upper bounded as
\begin{align} \nonumber
    \mathcal{R}_{T}(\mathcal{X})
    &\leq \Delta_{\max} \sum_{i = 1}^{N} \mathbb{E} [\mathscr{T}_{i}(T)] \\ \nonumber
    &\leq \sum_{i=1}^{N} {\Big[} \frac{4 w_{\max}^{2} s^{2} (s+1) \ln{T}}{\Delta_{\min}^{2}} + 1 + \frac{\pi^{2}}{3} s^{p} {\Big]} \Delta_{\max} \\
    &\leq {\Big[} \frac{4 w_{\max}^{2} s^{2} (s+1) N \ln{T} }{\Delta_{\min}^{2}} + N + \frac{\pi^{2}}{3} s^{p} N {\Big]} \Delta_{\max}.
\end{align}
\begin{flushright}
$\blacksquare$
\end{flushright}
%------------------------------->
% \newpage
% ~\newpage
%-------------------------------> Table of abbreviations
\section{Additional Information and Experiments Regarding Covid-19 Dataset}
\label{sec:AddExp}
\subsection{Abbreviations of Regions in Italy}
\Cref{tab:AbbList} lists the abbreviations together with the original names of the $21$ regions in Italy that we study in our numerical experiments.
%------------------------------->
\bigskip
\noindent
\begin{table}[b!]
\centering
\caption{List of regions in Italy and the corresponding abbreviations.}
\label{tab:AbbList}
\begin{tabular}{|l|*{4}{c|}} \toprule
% \backslashbox{}
{Abbreviation}
& \makebox[7em]{Region Name}
\\ \midrule
ABR & Abruzzo \\\hline
BAS & Basilicata \\\hline
 CAL & Calabria \\\hline
 CAM & Campania \\\hline
 EMR & Emilia-Romagna \\\hline
 FVG & Friuli Venezia Giulia \\\hline
 LAZ & Lazio \\\hline
 LIG & Liguria \\\hline
 LOM  & Lombardia \\\hline
 MAR  & Marche \\\hline
 MOL & Molise \\\hline
 PAB  & Provincia Autonoma di Bolzano \\\hline
 PAT  & Provincia Autonoma di Trento \\\hline
 PIE  & Piemonte \\\hline
 PUG  & Puglia \\\hline
 SAR  & Sardegna / Sardigna \\\hline
 SIC  & Siciliana \\\hline
 TOS  & Toscana \\\hline
 UMB  & Umbria \\\hline
 VDA  & Valle d'Aosta / Vallée d'Aoste \\\hline
 VEN  & Veneto \\ \bottomrule
\end{tabular}
\end{table}
\bigskip
%------------------------------->
\subsection{Overall Daily New Cases of Covid-19 Infection}
Italy has been severely affected by the COVID-19 pandemic. In April $2020$, the country had the highest death toll in Europe. From the beginning of the pandemic, with the goal of containing the outbreak, the Italian government has put in place an increasing number of restrictions. \Cref{fig1} depicts the overall daily new cases of covid-19 of the $21$ regions in Italy for the considered time interval in our numerical experiments. Due to space limitation, we use abbreviations for region names. \Cref{tab:AbbList} lists the abbreviations together with the original names of the regions.
%------------------------------->
\subsection{Additional Experiments}
As the governments try to contain the spread of Covid-19, they usually adopt restrictive measures such as quarantine over the regions that are showing the most number of overall daily new infections. As a result, they destructively ignore the effects of causal spread of the virus, meaning that they only focus on the overall daily new cases of regions without their causal effects on other regions. Therefore, we refer to this method of finding the best political interventions as the \textit{naive approach}. Our goal is to show the superiority of our proposed algorithm over this \textit{naive approach}. Similar to the experiments in our paper, we run SEM-UCB to find the $6$ regions that are contributing the most to the total number of daily new cases in Italy. 
\Cref{fig2} compares the performance of our algorithm with that of the naive approach. The diagram shows the ratio of the amount of contributions of the selected regions by the algorithms over the total number of daily new infections in the country for each day. As expected, after the initialization phase, SEM-UCB learns the underlying graph that influences the data. Consequently, it performs better with respect to the naive approach due to the fact that it takes the effects of causalities into account. We note that, due to such causal effects, it might be the case that a region with a lower number of overall daily cases contributes more than other regions with higher number of overall daily cases. This diagram provides the evidence that our framework can be highly effective in real-world applications such as analysis of the spread of Covid-19.

%------------------------------->
\begin{figure}[t]
   \centerline{\includegraphics*[width=.43\textwidth]{./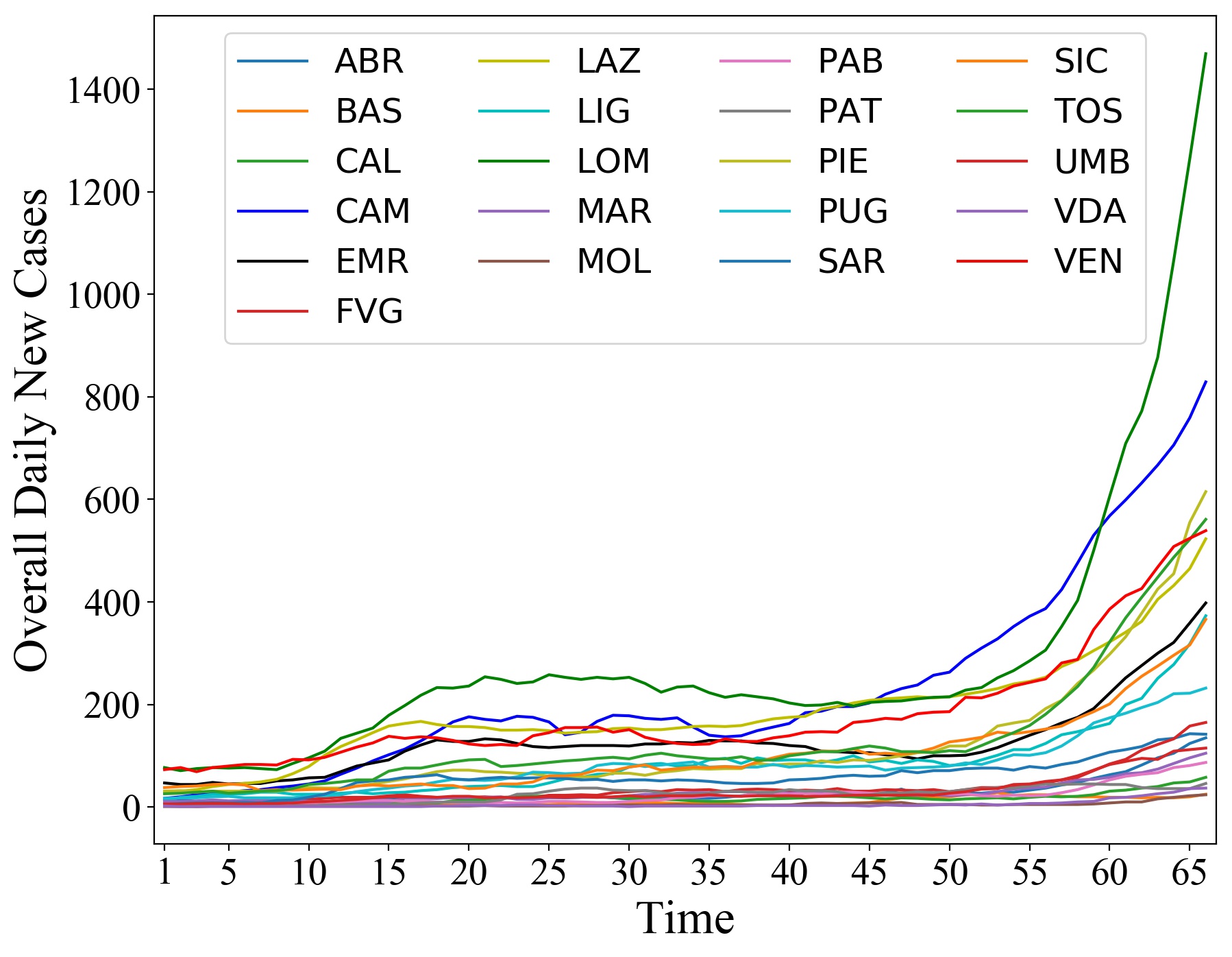}}
   \caption{Overall daily new cases of Covid-19 for different regions in Italy during the study period.}
\label{fig1}
\end{figure}
%-------------------------------

%------------------------------->
\begin{figure}[t]
   \centerline{\includegraphics*[width=.43\textwidth]{./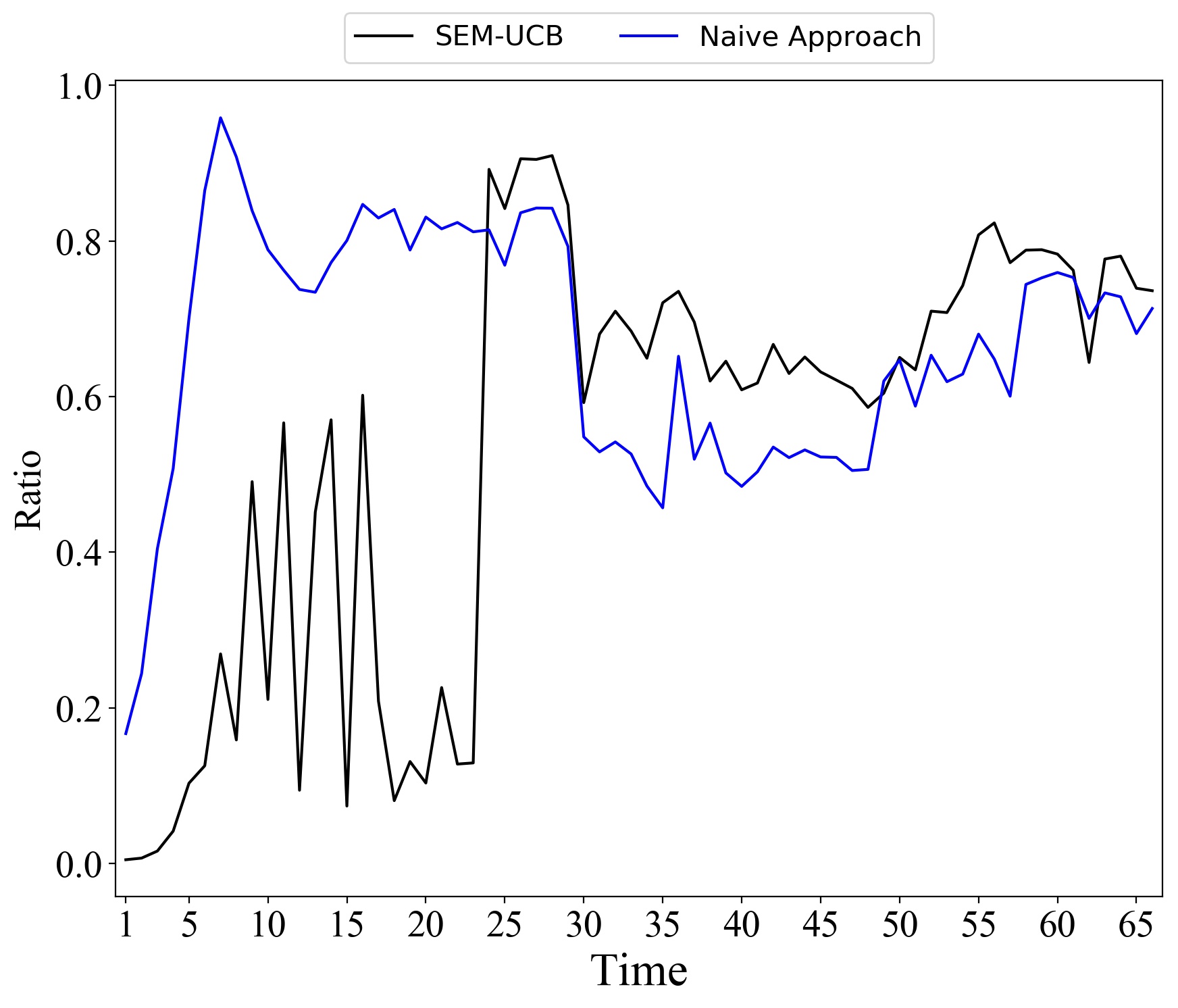}}
   \caption{The ratio of the amount of contributions of the selected regions by SEM-UCB and the naive approach over the total number of daily new infections in the country for each day.}
\label{fig2}
\end{figure}

%-------------------------------
% \newpage
%-------------------------------> References
%\bibliographystyle{named}
%\bibliography{ref-supp}
%-------------------------------

\end{document}